\theoremstyle{definition}
\newtheorem{theorem}{Theorem}[section]
\newtheorem{proposition}[theorem]{Proposition}
\newtheorem{definition}[theorem]{Definition}
\newtheorem{lemma}[theorem]{Lemma}
\newtheorem{corollary}[theorem]{Corollary}
\theoremstyle{remark}
\newtheorem{remark}{Remark}[section]
\title{$HS^2$: Active Learning over Hypergraphs}
\author{
I (Eli) Chien, Huozhi Zhou, and Pan Li\\
Department ECE,UIUC\\
Email: $\{$ichien3,hzhou35,panli2$\}$@illinois.edu
}
\begin{document}
\maketitle
%

%




\begin{abstract}
We propose a hypergraph-based active learning scheme which we term $HS^2$; $HS^2$ generalizes the previously reported algorithm $S^2$ originally proposed for graph-based active learning with pointwise queries~\cite{dasarathy2015s2}. Our $HS^2$ method can accommodate hypergraph structures and allows one to ask both pointwise queries and pairwise queries. Based on a novel parametric system particularly designed for hypergraphs, we derive theoretical results on the query complexity of $HS^2$ for the above described generalized settings. Both the theoretical and empirical results show that $HS^2$ requires a significantly fewer number of queries than $S^2$ when one uses $S^2$ over a graph obtained from the corresponding hypergraph via clique expansion.








\end{abstract}

\section{Introduction}
Active learning is useful for many machine learning applications where the acquisition of labeled data is expensive and time consuming~\cite{cohn1994improving}. In this setting, 
the learner aims to query for as few labels of data points as possible while still guaranteeing a desired level of label prediction accuracy. 

Graph-based active learning (GAL) refers to the particular case when graphs can be used to represent the data points. Such graphs may either come from real-life networks, e.g.\ social networks~\cite{bilgic2010active}, or some transformation based on data points, e.g.\ nearest neighbor graphs~\cite{he2008nearest}. 
Due to the prevalence of graph-structured data, GAL has attracted significant attention in the recent research literature. Most previous works on GAL shared a similar approach: the nodes selected for labeling are determined by minimizing some assumptive empirical error~\cite{zhu2003combining} or upper bound of an empirical error~\cite{guillory2009label,gu2012towards,cesa2013active}. Recently, Dasarathy et al.\ studied the GAL problem from a substantially different angle~\cite{dasarathy2015s2}: They attempted to directly detect the boundaries of different classes over the graph, which further lead to the classification of nodes. Their  approach, termed $S^2$,  benefits  from  tracking  labels  of  the  midpoint of  the shortest path among all paths whose ending nodes are with different labels. 
Surprisingly, $S^2$ is shown to be nearly min-max optimal for the non-parametric setting of active learning problems~\cite{dasarathy2015s2}. 

All above works for GAL depend on a \emph{pointwise oracle}, that is, each response of a query leading to the label of one vertex. However, recent works have pointed out that humans are better at making comparison, and therefore, a \emph{pairwise oracle}, whose response is of the form ``nodes $u$ and $v$ (do not) belong to in the same class'', appears more practical than the pointwise oracle ~\cite{ashtiani2016clustering,mazumdar2017clustering,chien2018query,tsourakakis2017predicting}. Mazumadar and Saha~\cite{mazumdar2017query} proposed a GAL algorithm that uses pairwise oracles. However, their algorithm strongly depends on the assumption that the underlying graph is generated by a stochastic block model (SBM)~\cite{holland1983stochastic}.  

    Graphs essentially capture pairwise relations between data points. Recently, many machine learning and data mining applications have found that hypergraphs modeling high-order relations may lead to better learning performance than traditional graph-based models~\cite{zhou2007learning,agarwal2006higher,benson2016higher}. For example, in subspace clustering, a fit to $d$-dimensional subspace can only be evaluated over at least $d+1$ data points~\cite{agarwal2005beyond,li2017inhomogeneous}; in hierarchical species classification over a foodweb, a carbon-flow unit based on four species appears to be most predictive~\cite{li2017inhomogeneous}. Although some unsupervised or semisupervised learning algorithms over hypergraphs have been reported in ~\cite{chien2018community, chien2018minimax, li2018quadratic}, few works targeted the setting of hypergraph-based active learning (HAL). Intuitively, to solve HAL problems, one may first ``project'' hypergraphs into graphs by replacing hyperedges with cliques (typically this is a procedure termed clique expansion (CE)~\cite{agarwal2005beyond,zhou2007learning}) and then use traditional GAL methods. However, a large number of works have demonstrated that CE causes distortion and leads to undesired learning performance~\cite{hein2013total,li2018submodular}. To the best of our knowledge, the approach proposed by Guillory et al.~\cite{guillory2011active} is the only work that may directly handle HAL. This method follows the traditional rule of GAL~\cite{guillory2009label} that attempts to minimize an upper bound one the prediction error and only works for a pointwise oracle. 


Here, we focus on HAL problems. Following the rule used in $S^2$~\cite{dasarathy2015s2}, we develop several active learning algorithms, termed with the prefix ``$HS^2$'' , which are compatible with multiple classes and with pointwise/pairwise oracles in the hypergraph setting. 
Our contributions are as follows. First, for the setting with a pointwise oracle, we provide a more general algorithm with tighter analysis compared to \cite{dasarathy2015s2}: We consider the $k$-class ($k\geq2$) setting instead of the two-class one considered in the original $S^2$ algorithm. We define novel complexity parameters that can handle the complex interaction between $k$-ary labels and hyperedges. We derive a tighter bound of query complexity, which, as a by-product, justifies that the proposed algorithm indeed requires fewer queries than a simple combination of CE and $S^2$. Second, for the setting with a pairwise oracle, we develop the first model-free HAL/GAL algorithm: Our algorithm does not need any generative assuptions on the graphs/hypergraphs like SBM~\cite{mazumdar2017query}; Moreover, the corresponding analysis for the pairwise-oracle setting is novel.

The paper is organized as follows: 
In Section \ref{sec:probform}, we introduce the notation and our problem formulation. In Section \ref{sec:hypergraph}, we focus on the case of the pointwise oracle and theoretically demonstrate the superiority of $HS^2$ over a combination of CE and $S^2$. 
In Section \ref{sec:pairwiseO} we focus on the case of the pairwise oracle. In Section \ref{sec:exp} we present experiments for both synthetic and real-world data to verify our theoretical findings. Due to the page limitation, we defer the missing proofs to the supplement.
\section{Problem formulations}\label{sec:probform}

We use $G = (V,E)$ to denote a hypergraph with a node set $V$ and a hyperedge set $E$. A hyperedge $e\in E$ is a set of nodes $e\subset V$ such that $|e|\geq 2$. When for all $e\in E$, $|e|=2$, $G$ reduces to a graph.

Suppose that each node belongs to one of $k$ classes. Let $[k]$ denote the set $\{1,2,...,k\}$. A \emph{labeling function} is a function $f: V \mapsto[k]$ such that $f(v)$ is the label of node $v$. Given the labels of all nodes, we call a hyperedge $e$ a \emph{cut hyperedge} if there exist $u,v\in e,f(u)\neq f(v)$. The \emph{cut set} $C$ includes all cut hyperedges. Moreover, define the boundary of the cut set $C$ as $\partial C = \bigcup_{e\in C}e$, i.e., the set of nodes that appear in some cut hyperedges. 
By removing all the cut hyperedges, we suppose that $G$ is partitioned into $T$ connected components whose node sets are denoted by $V_1, V_2, ..., V_T$. For any pair of connected components $V_r,\,V_s$, define the associated \emph{cut component} as $C_{rs} = \{e\in C:e \cap V_r\neq \emptyset,e \cap V_s\neq \emptyset\}$. Note that two different cut components of hyperedges $C_{rs}$ and $C_{r's'}$ may have intersection 
in the hypergraph setting and the union of $C_{rs}$ for all $(r,s)$ pairs is the cut set $C$. 

As we are considering active learning problems, in which the learner is allowed to ask queries and collect information from the oracle. In this work, we study two kinds of oracles: the pointwise oracle $\mathcal{F}_0:V\mapsto [k]$ and the pairwise oracle $\mathcal{O}_0:V\times V\mapsto \{0,1\}$, which are defined as follows. For all $v_1, v_2\in V$,
\begin{align*}
    \mathcal{F}_0(v_1)= f(v_1),\;
 \mathcal{O}_0(v_1, v_2)=\begin{cases}1, &\;\text{if}\;f(v_1)=f(v_2)\\
0, &\;\text{if}\;f(v_1)\neq f(v_2)\\
\end{cases}
\end{align*}
In the pairwise setting, we also allow for a noisy oracle, denoted by $\mathcal{O}_p$, where $p$ stands for the error probability of the oracle, i.e.,
\begin{align*}
  P(\mathcal{O}_p(v_1, v_2)=\mathcal{O}_0(v_1, v_2)) = 1-p
\end{align*}
We assume that for different pairs of nodes, the responses of the oracle are mutually independent. However, for each pair of node $(v_1,v_2)$, $\mathcal{O}_p(v_1, v_2)$ is consistently 0 or 1. Therefore, querying one pair multiple times does not lead to different responses or affect the learning performance.


We use the term \emph{query complexity}, denoted by $\mathcal{Q}$, to quantify how many times an algorithm uses the oracle. Our goal is to design algorithms which learn the partition $V=\bigcup_{i=1}^T V_i$, or equivalently cut set $C$, with query complexity $\mathcal{Q}$ as low as possible. In this work, due to the randomness of the proposed algorithms, we focus on learning the exact $C$ with high probability. That is, given a $\delta\in (0,1)$, with probability $1-\delta$ we recover $C$ with query complexity $\mathcal{Q}(\delta)$.\\
\begin{remark}
The original $S^2$ paper considers an simpler noise model~\cite{dasarathy2015s2}, where one allows independent responses after querying for a single event multiple times. In this case, a simple majority voting can be used for aggregating and denoising the information. However, according to real-life experiments in crowdsourcing~\cite{mazumdar2017clustering, prelec2017solution}, such a method intrinsically introduces bias and thus majority voting may even increase the error. Therefore, we consider a more realistic model used in~\cite{mazumdar2017clustering}. Also note that this noise model is not applicable to the case of pointwise oracle as the noise may always lead to some incorrect labels that can never be fixed.  
\end{remark}

\section{$HS^2$ with a pointwise oracle}\label{sec:hypergraph}


In this section, we propose the $HS^2$ algorithm with a pointwise oracle, termed $HS^2$-point, which essentially generalizes the $S^2$ algorithm for GAL~\cite{dasarathy2015s2} to the hypergraph setting. 
$HS^2$-point is similar to $S^2$, in so far that the algorithm only asks for the label of the midpoint of current shortest path among all paths that connect two nodes with different labels, while the path now is defined over hypergraphs.
The novelty of $HS^2$-point appears in the corresponding analysis of the query complexity. We find that how well cut components are clustered determines the  query  complexity. Later, we will formally define it as the \emph{clusteredness} of cut components. 
In contrast to~\cite{dasarathy2015s2}, for $HS^2$-point, clusteredness of cut components is determined by a much more complicated measure that characterizes the distance between cut hyperedges. Moreover, we tighten the original analysis for $S^2$. As a corollary, the tightened bound shows that $HS^2$-point requires lower query complexity than a naive combination of the clique-expansion method and $S^2$.

We start by introducing the $HS^2$-point algorithm. As $HS^2$ depends on shortest paths, we first define a path in hypergraphs and its length.
\begin{definition}[Path in hypergraph]\label{def:hyperpath}
    Given a hypergraph $G = (V,E)$, we say there is a path of length $l$ between nodes $u,v\in V$ if and only if there exists a sequence of hyperedges $(e_1,e_2,...,e_l)\subseteq E$ such that $u\in e_1$, $v\in e_l$ and $e_i \cap e_{i+1} \neq \emptyset \quad\forall i\in[l-1].$
\end{definition}
\begin{algorithm2e}[htb]
\caption{$HS^2$-point}\label{alg:S2}
\SetAlgoLined
\DontPrintSemicolon
\SetKwInOut{Input}{Input}
\SetKwRepeat{Do}{do}{while}
\SetKwInOut{Output}{Output}
\SetKwInput{Mainalgo}{Main Algorithm}
  \Input{A hypergraph $G$, query complexity budget $\mathcal{Q}(\delta)$
  }
  \Output{A partition of $V$
  }
    \Mainalgo{
    $L\leftarrow\emptyset$\;
    \While{1}{
    $x\leftarrow $ Uniformly at random pick an unlabeled node.\;
    \Do{$x\leftarrow MSSP(G,L)$ exists}{
        Add $(x,\mathcal{F}_0(x))$ to $L$\;
        Remove all hyperedges containing nodes with different label from $G$.\;
        \If{more than $\mathcal{Q}(\delta)$ queries are used}{
            Return the remaining connected components of $G$ 
        }
    }
    }
  }
\end{algorithm2e}
\vspace*{-.4cm}

Conceptually, the algorithm operates by alternating two phases: random sampling and aggressive search. Each outer loop corresponds to a random sampling phase, where the algorithm will query randomly. This phase will end when two nodes with different labels are detected and there is a path connecting them, which is determined by the subroutine $MSSP(G,L)$. Then, the algorithm turns into the inner loop, i.e., the aggressive search phase that searches cut hyperedges. In the inner loop, cut hyperedges are gradually removed and $G$ breaks into a collection of connected components. $L$ is a list to collect labeled nodes with labels. Algorithm 1 will keep tracking the size of $L$. When the query complexity budget is used up, the algorithm ends and outputs the remaining connected components of $G$.

The aggressive search phase that finds all cut hyperedges within low query complexity is the most important step. The key idea is the following. On the path between two nodes with different labels, there must be at least one cut hyperedge. Intuitively, to efficiently find this cut hyperedge, we may use a binary-search method along the shortest one of such paths. That is, we iteratively query for the label of the node that bisects this path. 
The binary search and the search of a shortest path are done simultaneously by the key subroutine $MSSP(G,L)$ (Algorithm \ref{alg:MMSP}). Finding the shortest path in the hypergraph can be implemented via standard BFS algorithm~\cite{cormen2009introduction}. A more efficient way to search the shortest path in a dynamic hypergraph is described in \cite{gao2015dynamic}. Since we focus on query complexity, discussion of the time complexity of the algorithms is outside the scope of the paper.\\
\vspace*{-.5cm}
\begin{algorithm2e}[htb]
\caption{MSSP}\label{alg:MMSP}
\SetAlgoLined
\DontPrintSemicolon
\SetKwInOut{Input}{Input}
\SetKwRepeat{Do}{do}{while}
\SetKwInOut{Output}{Output}
\SetKwInput{Mainalgo}{Main Algorithm}
  \Input{The hypergraph graph $G$, label list $L$
  }
  \Output{The midpoint of shortest-shortest path
  }
    \Mainalgo{\;
        \For{each $v,u\in L$ such that $u,v$ has different label}{
        $P_{v,u}\leftarrow $ shortest path between $v,u$ in $G$.\;
        $l_{u,v}\leftarrow $ length of $P_{u,v}$.($=\infty$ if doesn't exist)\;
        }
        $(v^*,u^*) = \arg\min l_{u,v}$\;
        \uIf{$(v^*,u^*)$ exists and $l_{v^*,u^*}\geq2$ }{
            Return the midpoint of $P_{v^*,u^*}$.\;
        }
        \Else{
            Return $\emptyset$\;
        }
    }
\end{algorithm2e}
\vspace*{-.5cm}

To characterize the query complexity of Algorithm \ref{alg:S2} we need to introduce the following concept.


\begin{definition}[Balancedness]\label{def:balancedness}
    We say that $G$ is $\beta$-balanced if $\beta = \min_{i\in[k]}\frac{|V_i|}{n}$.
\end{definition}
\begin{definition}[Distance between cut hyperedges]\label{def:hypercutedgeDist}
    Let $d_{sp}^{G-C}(v,u)$ denote the shortest path between nodes $v,u$ with respect to the hypergraph $G$ after all cut hyperedges are removed. Let $\Omega_i(e) = \{x\in e | x \in V_i\}$. Define the directed distance between cut hyperedges as $\Delta:C\times C\rightarrow\mathbb{N}\cup\{0,\infty\}$:
    for $e_1$, $e_2$ $\in C$,
    \begin{align}\label{def:Deltahyper0}
        &\Delta(e_1,e_2)\nonumber \\
        &= \sup_{(i,j):e_1,e_2\in C_{i,j}}\bigg(\sup_{v_1\in \Omega_i(e_1)}\inf_{ u_1\in \Omega_i(e_2)}d_{sp}^{G-C}(v_1,u_1) \nonumber\\
        &+ \sup_{v_2\in \Omega_j(e_1)}\inf_{u_2 \in \Omega_j(e_2)}d_{sp}^{G-C}(v_2,u_2) + 1\bigg)
    \end{align}
    If $e_1,e_2$ do not belong to a common cut component, let $\Delta(e_1,e_2) = \infty$.
\end{definition}
For $e_1,e_2$ that belong to certain cut component, the metric $\Delta(e_1,e_2)$ characterizes the length of shortest path that contains $e_2$ after we have found and removed $e_1$. With the above distance, we may characterize the clusteredness of cut hyperedges. First, we need to construct a dual directed graph $H_r = (C,\mathcal{E})$ according to the following rule: the nodes of $H_r$ correspond to cut hyperedges of $G$ and for any two nodes $e, e'$, $ee'$ is an arc in $H_r$ if and only if $\Delta(e,e')\leq r$. According to the definition, each cut component $C_{i,j}$ is mapped to a group of nodes in $H_r$. Now, we may define $\kappa$-clusteredness:
\begin{definition}[$\kappa$-clusteredness]\label{def:hyperkappa}
A cut set $C$ is said to be $\kappa$-clustered if for each cut component $C_{i,j}$, the corresponding nodes in $H_{\kappa}$ are strongly connected.
\end{definition}
    $\kappa$-clusteredness indicates the cut hyperedges in one cut component should not be $\kappa$ away from another cut hyperedge. For better understanding, suppose $HS^2$-point has found and removed the cut hyperedge $e_1$. Another hyperedge $e_2$ in the same cut component appears in the shortest path whose endpoints are in $e_1$. This parameter guarantees that $HS^2$-point needs only at most $\lceil\log_2\kappa\rceil$ queries along such a path to find the cut hyperedge $e_2$.
Hence, if the hypergraph has a small $\kappa$, we can efficiently find all the cut hyperedges in $C$ after we find the first one in each cut component in the random sampling phase. Typically $\kappa$ is not large, as $\kappa$ is naturally upper bounded by the diameter of the hypergraph, which, in a small-world situation, is $O(\log n)$ at most~\cite{watts1998collective}.

The novel part of $HS^2$-point is that we propose Definition~\ref{def:hypercutedgeDist} and Definition~\ref{def:hyperkappa}, which properly generalizes the parametric system of $S^2$ \cite{dasarathy2015s2} to hypergraphs and leads to the following theoretical estimation of query complexity. 

\begin{theorem}\label{thm:1}
  Suppose that $G=(V,E)$ is $\beta$-balanced. The cut set $C$ induced from a label function $f$ is $\kappa$-clustered and $m$ non-empty cut components. Then for any $\delta>0$, Algorithm~\ref{alg:S2} will recover $C$ exactly with probability at least $1-\delta$ if $\mathcal{Q}(\delta)$ is larger than
  \begin{align}
  	& \mathcal{Q}^{\ast}(\delta) \triangleq \frac{\log(1/(\beta\delta))}{\log(1/(1-\beta))}+m(\lceil\log_2(n)-\log_2(\kappa)\rceil) \nonumber \\
    &  + \min(|\partial C|,|C|) (\lceil\log_2(\kappa)\rceil+1) \label{eq:qcom}
  \end{align}
\end{theorem}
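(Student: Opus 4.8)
The plan is to reproduce the two-phase (random sampling / aggressive search) analysis of $S^2$, substituting the hypergraph notions of path length, the distance $\Delta$ and $\kappa$-clusteredness for their graph analogues. Write $G-C$ for the hypergraph obtained by deleting all cut hyperedges and call a run of Algorithm~\ref{alg:S2} \emph{done} once its dynamically shrinking hypergraph has become $G-C$. Because the connected components of $G-C$ are exactly $V_1,\dots,V_T$ and a hyperedge belongs to $C$ if and only if it meets two distinct $V_i$'s, reaching \emph{done} is equivalent to recovering $C$; moreover the only way the algorithm ever halts is by exhausting its budget, so it suffices to prove that, with probability at least $1-\delta$, the number of queries issued before \emph{done} is at most $\mathcal{Q}^{\ast}(\delta)$.

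First I would dispose of the random-sampling queries. Let $N_1 = \lceil \log(1/(\beta\delta))/\log(1/(1-\beta))\rceil$ and let $\mathcal{A}$ be the event that the first $N_1$ random queries hit every component $V_i$. Since $\beta$-balancedness forces $T \le 1/\beta$, and since a uniform draw from the still-unlabeled nodes misses a yet-unrepresented $V_i$ with probability at most $1-\beta$, a union bound yields $P(\mathcal{A}^c) \le \frac{1}{\beta}(1-\beta)^{N_1} \le \delta$. The structural payoff is a ``no deadlock'' fact: once every $V_i$ is represented in $L$, as long as the run is not yet \emph{done} there is an unremoved cut hyperedge $e$, whose two differently-labeled nodes lie in distinct components $V_p,V_q$ sitting inside a single connected component of the current hypergraph; the $L$-representatives of $V_p$ and $V_q$ are then oppositely labeled, connected, and at current distance at least $2$ (a distance-$1$ pair would already have triggered the hyperedge-removal step), so $MSSP(G,L)$ returns a genuine midpoint. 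Consequently, under $\mathcal{A}$, after the $N_1$-th random query the inner \emph{do}-loop is never exited before \emph{done}, so at most $N_1$ random-sampling queries are issued before \emph{done}.

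Next I would bound the aggressive-search queries under $\mathcal{A}$. Each of them labels a fresh node (the midpoint of the current shortest-shortest path cannot already lie in $L$, else a shorter oppositely-labeled pair would exist). Call a query a \emph{progress} query if its removal step deletes at least one new cut hyperedge; such a query labels a node sharing a cut hyperedge with a previously labeled node, hence a node of $\partial C$, and reveals a fresh member of $C$, so there are at most $\min(|\partial C|,|C|)$ progress queries. Any non-progress (\emph{bisection}) query at least halves the length of the shortest-shortest path, and a length-$2$ path is forced to be a progress query; hence the queries split into \emph{runs}, each a block of consecutive bisections closed by exactly one progress query (under $\mathcal{A}$ every bisection is followed by a progress query before \emph{done}), and there are at most $\min(|\partial C|,|C|)$ runs. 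Inside a run the length halves each step, so per run there are at most $\lceil\log_2\kappa\rceil$ bisections on paths of length $\le\kappa$; together with the progress queries this accounts for at most $\min(|\partial C|,|C|)(\lceil\log_2\kappa\rceil+1)$ queries. What remains are the bisections on paths of length $>\kappa$, and this is where Definitions~\ref{def:hypercutedgeDist} and~\ref{def:hyperkappa} enter: once one cut hyperedge of a cut component $C_{i,j}$ has been discovered, walking an $H_{\kappa}$-path inside $C_{i,j}$ and using that each arc $\Delta(\cdot,\cdot)\le\kappa$ certifies a length-$\le\kappa$ path between already-labeled oppositely-labeled nodes that contains the next cut hyperedge, one concludes that whenever the shortest-shortest path exceeds $\kappa$ every already-``opened'' cut component is in fact exhausted. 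Hence the long bisections are spent solely on ``opening'' cut components, at most $\lceil\log_2 n - \log_2\kappa\rceil$ of them per opening (the shortest-shortest path never exceeds the diameter $n-1$ and halves at each such step) and at most $m$ openings overall. Summing the three contributions gives an aggressive-search total of at most $m\lceil\log_2 n - \log_2\kappa\rceil + \min(|\partial C|,|C|)(\lceil\log_2\kappa\rceil+1)$; adding $N_1$ yields $\mathcal{Q}^{\ast}(\delta)$, and since $P(\mathcal{A}) \ge 1-\delta$ the proof is complete.

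I expect the last step --- bounding the long bisections by $m\lceil\log_2 n - \log_2\kappa\rceil$ --- to be the main obstacle. The subtle point is to guarantee that the length-$\le\kappa$ path delivered by $\Delta(e_1,e_2)\le\kappa$ always has both of its endpoints among the \emph{currently labeled} nodes, so that $MSSP$ genuinely detects it; this forces one to track precisely which nodes are revealed when a cut hyperedge is discovered (the two differently-labeled nodes of the hyperedge that gets removed) and to line them up with the components $V_i,V_j$ over which the supremum in Definition~\ref{def:hypercutedgeDist} is taken. The remaining pieces --- the coupon-collector bound on random sampling, the ``no deadlock'' claim, and the counting of progress queries --- are routine structural observations about the dynamics of Algorithm~\ref{alg:S2}.
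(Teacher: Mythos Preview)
Your proposal is correct and follows essentially the same approach as the paper's own proof: both split the query count into a random-sampling contribution (handled by the coupon-collector bound you sketch, which is exactly Lemma~\ref{lma:scqlemma1}) and an aggressive-search contribution organized into ``runs,'' bound the number of runs by $\min(|\partial C|,|C|)$ via the same two observations, and separate long from short runs using $\kappa$-clusteredness. Your acknowledged subtlety---that the $\kappa$-length path produced by $\Delta(e_1,e_2)\le\kappa$ must have its endpoints among the \emph{already labeled} nodes---is precisely what the paper addresses in Lemma~\ref{lma:HS2pairNB}, and the paper's own proof of Theorem~\ref{thm:1} is at the same level of rigor on this point.
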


    Note that Theorem \ref{thm:1} not only generalizes Theorem 3 from \cite{dasarathy2015s2} to the hypergraph case but also provides a tighter result. Specifically, it improves the original term $|\partial C|$ to $\min(|\partial C|,|C|)$. Recall the definitions of $|\partial C|$ and $|C|$. Typically, $|C| < |\partial C|$ corresponds to the case when the number of cut hyperedges is small while the size of each cut hyperedge is large, which may appear in applications that favor large hyperedges~\cite{hein2013total,li2018submodular,purkait2017clustering}. This improvement is also critical for showing that the $HS^2$-point algorithm has lower query complexity than a simple combination of CE and the original $S^2$ algorithm~\cite{dasarathy2015s2}. We will illustrate this point in the next subsection.

\subsection{Comparison with clique expansion}\label{subsec:CE}
Clique expansion (CE) is a frequently used tool for learning tasks over hypergraphs~\cite{zhou2007learning,agarwal2006higher,chien2018minimax,li2017inhomogeneous}. CE refers to the procedure that transforms hypergraphs into graphs by expanding hyperedges into cliques. Based on the graph obained via CE, one may leverage the corresponding graph-based solvers to solve learning tasks over hypergraphs. For HAL, we may choose a similar strategy. Suppose the obtained graph after CE is denoted by $G^{(ce)}=(V^{(ce)}, E^{(ce)})$, so that $V^{(ce)}=V$, and for $u,\,v\in V^{(ce)}$, $uv\in E^{(ce)}$ if and only if $\exists e\in E$ such that $u,v\in e$. In this subsection we will compare the bounds of query complexity of $HS^2$-point evaluated over $G$ and that of $S^2$ evaluated over $G^{(ce)}$.

Suppose $G$ is $\beta$-balanced, with $m$ cut components and the corresponding cut set $C$ is $\kappa$-clustered. In the following proposition, we show that some parameters of $G^{(ce)}$ are the same as those of $G$.
\begin{proposition}\label{prop:C_ce}
    $G^{(ce)}$ is $\beta$-balanced and has exactly $m$ cut components. Let $C^{(ce)}$ be the cut set of $G^{(ce)}$. Then, $C^{(ce)}$ is $\kappa$-clustered and $|\partial C| = |\partial C^{(ce)}|$. However, we always have $\min(|C|,|\partial C|) \leq \min(|C^{(ce)}|,|\partial C^{(ce)}|)$.
\end{proposition}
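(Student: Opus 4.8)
The plan is to verify the assertions in the stated order, reducing everything to a few monotonicity facts about clique expansion (CE): CE leaves $V$ and the labelling $f$ untouched, CE only adds edges (so it can only shorten shortest paths), and inside any hyperedge the vertices sharing a label become a clique of monochromatic — hence never-cut — edges of $G^{(ce)}$.

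The first step is to show that deleting the cut hyperedges from $G$ and deleting the cut edges from $G^{(ce)}$ produce the \emph{same} partition $V_1,\dots,V_T$: an edge surviving in $G^{(ce)}-C^{(ce)}$ has its two (equal-label) endpoints in a common hyperedge, while the CE of a monochromatic hyperedge connects exactly that vertex set, so the residual graphs have identical connected components. Balancedness is then immediate from $V^{(ce)}=V$, and for any pair of components $(V_r,V_s)$ one gets $C_{rs}\neq\emptyset \iff C^{(ce)}_{rs}\neq\emptyset$ by pulling one vertex of each component out of a witnessing cut hyperedge — they carry different labels, as they lie in different components — to obtain a witnessing cut edge, and conversely; hence $G^{(ce)}$ has exactly $m$ nonempty cut components. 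The identity $\partial C=\partial C^{(ce)}$ is the same pull-out argument applied one vertex at a time: $x$ in a cut hyperedge $e$ means $e$ carries a label $\neq f(x)$, so $x$ is an endpoint of a cut edge, and conversely each endpoint of a cut edge lies in the hyperedge that generated it.

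For $\kappa$-clusteredness I would show that $H^{(ce)}_\kappa$ inherits strong connectivity on each $C^{(ce)}_{i,j}$ from $H_\kappa$. Given $g_1,g_2\in C^{(ce)}_{i,j}$, they lie in cut hyperedges $e_1,e_2\in C_{i,j}$; take a directed path $e_1\to\cdots\to e_2$ in $H_\kappa$ and walk along it, at each hyperedge picking the vertices in $V_i$ and in $V_j$ that are closest in $G-C$ to those already chosen. The structure of $\Delta$ as a supremum of infima bounds each greedy step's $\Delta^{(ce)}$-length by the corresponding $\Delta(\cdot,\cdot)\le\kappa$ (using $d_{sp}^{G^{(ce)}-C^{(ce)}}\le d_{sp}^{G-C}$), and the terminal adjustment onto $g_2$ stays within a single hyperedge's clique; this gives a directed $g_1\to\cdots\to g_2$ path in $H^{(ce)}_\kappa$.

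Finally, since $|\partial C|=|\partial C^{(ce)}|$ is already in hand, the remaining inequality reduces to $\min(|C|,|\partial C|)\le|C^{(ce)}|$. The route I would try is an injection $C\hookrightarrow C^{(ce)}$ sending each cut hyperedge to one of the cut edges it contains, supplied by Hall's theorem once one checks that every family $S$ of cut hyperedges spans at least $|S|$ distinct different-label pairs among its cliques. I expect this to be the main obstacle: a fixed ``canonical'' witness edge need not be injective when hyperedges overlap heavily, so the Hall-type count — together with deciding which of $|C|$ and $|\partial C|$ attains the left-hand minimum and handling each case separately — is where the genuine combinatorial work sits.
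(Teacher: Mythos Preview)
Your arguments for $\beta$-balancedness, the $m$ cut components, and $|\partial C|=|\partial C^{(ce)}|$ match the paper's (yours are stated with a bit more care about why the residual partitions coincide). For $\kappa$-clusteredness the paper simply observes that clique expansion does not change shortest-path lengths between any pair of vertices and stops there; your explicit transfer along paths in $H_\kappa$ to build paths in $H^{(ce)}_\kappa$ is in the same spirit but more scrupulous.

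The genuine gap is in the inequality. Your plan is to produce an injection $C\hookrightarrow C^{(ce)}$ by Hall's theorem, which would yield $|C|\le|C^{(ce)}|$. But that inequality is \emph{false} in general: take $n$ vertices with pairwise distinct labels and let $E$ consist of all $\binom{n}{3}$ triples; then every hyperedge is cut, $G^{(ce)}$ is the complete graph, and $|C|=\binom{n}{3}>\binom{n}{2}=|C^{(ce)}|$. So no injection $C\hookrightarrow C^{(ce)}$ can exist, the Hall condition you are hoping for must fail, and in this regime the left-hand minimum is attained by $|\partial C|$, about which your matching argument says nothing.

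The paper's route is a dichotomy rather than a matching. If $|\partial C^{(ce)}|\le|C^{(ce)}|$ the conclusion is immediate from $|\partial C|=|\partial C^{(ce)}|$. In the remaining case $|C^{(ce)}|<|\partial C^{(ce)}|$, the paper looks at the auxiliary graph $G'=(\partial C^{(ce)},C^{(ce)})$: it has strictly more vertices than edges and no isolated vertices, so its average degree is below $2$. The paper uses this density constraint to argue that $G'$ contains no clique of size $\ge 3$, and from that concludes every cut hyperedge of $G$ is already an ordinary edge, whence $C=C^{(ce)}$ outright. The idea you are missing is this case split together with the average-degree observation on $G'$; a Hall-type injection is the wrong tool here.
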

As graphs are special case of hypergraphs, Theorem~\ref{thm:1} can be used to characterize the query complexity of $S^2$ over $G^{(ce)}$. For this purpose, recall the parameters in Theorem~\ref{thm:1} that determine the query complexity. Combining them with Proposition~\ref{prop:C_ce}, it is clear that the $HS^2$ algorithm often allows for lower query complexity than that of CE plus $S^2$ and such gain comes from the case when $|C|\leq |C^{(ce)}|$. To see the benefit of $HS^2$-point more clearly, consider the example in Figure \ref{fig:explanation1}.
\begin{figure}[t]
\centering
    \includegraphics[trim={3.4cm 12.3cm 20.5cm 2.3cm},clip, width=0.85\columnwidth]{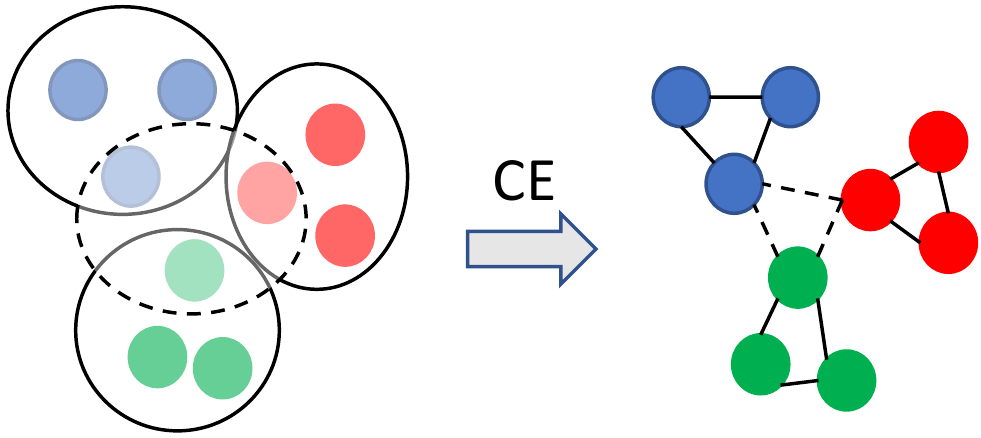}
    \caption{An example of clique expansion. Left: the orginal hypergraph $G$ with 4 hyperedges; Right: the clique-expanded graph $G^{(ce)}$. The colors of nodes identify the labels and the dashed hyperedges/edges are cut hyperedges/edges.}
    \label{fig:explanation1}
\end{figure}
Once $HS^2$-point finds and removes the cut hyperedge of $G$, the correct partition of $V$ is learnt. So we only need to collect the labels of any two nodes in $|\partial C|$. However, if we use $S^2$ over the obtained graph $G^{(ce)}$, all three nodes in $\partial C^{(ce)}(=\partial C)$ must be queried for labels before we learn the correct partition.
\begin{remark}
The benefit of $HS^2$-point essentially comes from the fact that $|C|$ is often smaller than $|C^{(ce)}|$. Note that the query complexity for $S^2$ derived in \cite{dasarathy2015s2} does not reflect such a parametric dependence.
\end{remark}
\begin{remark}
    Note that in the example in Figure~\ref{fig:explanation1} we have $|C|\leq |C^{(ce)}|$ and $|\partial C|=|\partial C^{(ce)}|$. However, $|C|$ is not necessarily smaller than $|C^{(ce)}|$. Consider the following example: Suppose all nodes of $G$ have different labels and there are ${{n}\choose{3}}$ hyperedges in $E$ that cover all triples. Then, $G^{(ce)}$ is a big clique connecting all nodes. In this case $|C| = {{n}\choose{3}} > {{n}\choose{2}} = |C^{(ce)}|$. Nevertheless, in this case we have $|\partial C| = |\partial C^{(ce)}|<|C^{(ce)}|$ and hence Proposition \ref{prop:C_ce} still holds. This example shows that it is non-trivial to prove Proposition~\ref{prop:C_ce}. 
\end{remark}
\section{$HS^2$ with pairwise oracle}\label{sec:pairwiseO}
We now look into the HAL problem with a pairwise oracle. Since the proposed algorithms also depends on the strategy of searching for the shortest path that connects two nodes with different labels, we refer to them as $HS^2$-pair. As mentioned, to our best knowledge, $HS^2$-pair appears to be the first model-free strategy to handle the HAL/GAL problems with a pairwise oracle. 

We analyze settings with both noiseless and noisy oracles. The noiseless case is simple and will be introduced first. Then, we introduce the noisy case that is much more involved.
Note that in the setting with a pairwise oracle, the exact label of each node is not known and not relevant. Hence, without loss of generality, we associate the $i$th class identified during the learning procedure with the label $i$.
\subsection{Noiseless case}
We start by introducing the setting with a noiseless pariwise oracle. The key point is to first seed a few classes and then classify a newly selected node via pairwise comparison with the seeds. If there is a match, we assign the node to its corresponding class; otherwise, we assign the node to a new class. Notationally, we let $S_i, i\in[k]$ be the set of nodes that have been classified to the $i$th class so far. Each $S_i$ starts from one node when a node from the $i$th new class is detected and $S_i$ gradually grows when new nodes of this class are detected. As all nodes $u\in S_i$ share the same label, for a new node $v$, we use $\mathcal{O}_0(v,S_i)$ to denote the query $\mathcal{O}_0(v,u),\,u\in S_i$. The $HS^2$-pair algorithm for the noiseless case is listed in Algorithm \ref{alg:S2_pair}.


\vspace*{-.3cm}
\begin{algorithm2e}
\caption{The noiseless $HS^2$-pair}\label{alg:S2_pair}
\SetAlgoLined
\DontPrintSemicolon
\SetKwInOut{Input}{Input}
\SetKwRepeat{Do}{do}{while}
\SetKwInOut{Output}{Output}
\SetKwInput{Mainalgo}{Main Algorithm}
  \Input{A hypergraph $G$ and query complexity budget $\mathcal{Q}(\delta)$.
  }
  \Output{A partition of $V$.
  }
    \Mainalgo{
    $L\leftarrow\emptyset$, \#c $\leftarrow 1$ \; 
    $v\leftarrow $ Uniformly at random pick an unlabeled node \;\\
    Add $(v,\,1)$ to $L$ and set $S_1 \leftarrow \{x\}$ \; \\
    \While{1}{
    $v\leftarrow $ Uniformly at random pick an unlabeled node \;\\
    \Do{$x\leftarrow MSSP(G,L)$ exists}{
        Collect $\mathcal{O}_0(v,S_i)$ for all $i\in [\#\text{c}]$ \; \\
    	\uIf{$\exists i,\;\mathcal{O}_0(v,S_i) = 1$}{
        Add $(v,\,i)$ to $L$ and $v$ to $S_i$\;
        }\Else{
        $\#\text{c}\leftarrow \#\text{c} + 1$\; \\
        Add $(v,\,\#\text{c})$ to $L$ and Set $S_{\#\text{c}} \leftarrow \{v\}$\;
        }
        Remove all hyperedges containing nodes with different labels from $G$\;
        \If{more than $\mathcal{Q}(\delta)$ queries are used}{
            Return the remaining connected components of $G$
        }
     }
    }
  }
\end{algorithm2e}
\vspace*{-.3cm}
The only difference between $HS^2$-pair in the noiseless case and $HS^2$-point is the way to label a newly selected node. We leverage the pairwise oracle to compare the new node with each class that has been identified. Intuitively, we need at most $k$ pairwise queries to identify the label of each node. Moreover, without additional assumptions on the data, it appears impossible to identify the label of each node with $o(k)$ many pairwise queries. Therefore, combining this observation with Theorem~\ref{thm:1}, we establish the query complexity of Algorithm~\ref{alg:S2_pair} in the following corollary, which essentially is $\Theta(k)$ times the number of queries required by  $HS^2$-point. 
\begin{corollary}\label{cor:pairwise}
 Suppose $G=(V,E)$ is $\beta$-balanced. The cut set $C$ is $\kappa$-clustered and the number of non-empty cut components is $m$. Then for any $\delta>0$, Algorithm~\ref{alg:S2_pair} will recover $C$ exactly with probability at least $1-\delta$ if $\mathcal{Q}(\delta)$ is larger than $kQ^*(\delta)$, i.e.,
    \begin{align*}
          &k\frac{\log(1/(\beta\delta))}{\log(1/(1-\beta))}+km(\lceil\log_2(n)-\log_2(\kappa)\rceil) \\
        & + k\min(|C|,|\partial C|) (\lceil\log_2(\kappa)\rceil+1).
  \end{align*}
\end{corollary}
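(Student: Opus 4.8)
The plan is to reduce the statement to Theorem~\ref{thm:1} by coupling an execution of Algorithm~\ref{alg:S2_pair} with one of Algorithm~\ref{alg:S2}. Fix a realization of the uniform draws of unlabeled nodes. I would argue that, on this realization, the noiseless $HS^2$-pair and $HS^2$-point (the latter equipped with the pointwise oracle $\mathcal{F}_0$) produce identical trajectories: they select the same sequence of nodes, invoke $MSSP$ on the same dynamic hypergraph together with the same label list — up to a bijective renaming of the $k$ labels, as already anticipated in the Remark that identifies the $i$th discovered class with the label $i$ — and remove the same hyperedges at each step. The only place where the two algorithms differ is how the class of a freshly selected node $v$ is obtained: $HS^2$-point reads $\mathcal{F}_0(v)$, whereas $HS^2$-pair issues the queries $\mathcal{O}_0(v,S_i)$, $i\in[\#\mathrm{c}]$. (Note the computation $MSSP(G,L)$ and the hyperedge-removal step depend on $L$ only through the partition of the labeled nodes into classes, not through the actual label values, which is why a consistent renaming is harmless, assuming both algorithms break ties in $MSSP$ the same deterministic way.)

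To justify the coupling it suffices to check that, in the noiseless case, the pairwise mechanism assigns $v$ to the correct equivalence class. By construction the sets $S_1,\dots,S_{\#\mathrm{c}}$ carry pairwise distinct true labels, since a new set is spawned only when $v$ matches none of the current ones; in particular $\#\mathrm{c}\le k$ always. Because $\mathcal{O}_0(v,u)=1$ exactly when $f(v)=f(u)$, precisely one of two things happens: either $\mathcal{O}_0(v,S_i)=1$ for a single index $i$, in which case $f(v)$ equals the common label of $S_i$ and $v$ is correctly added to that class; or $\mathcal{O}_0(v,S_i)=0$ for every $i$, in which case $f(v)$ is a label not yet seen and a new class is correctly created. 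Hence the partition of the labeled nodes maintained by $HS^2$-pair is always the restriction of the true partition (after renaming), so ``remove all hyperedges containing nodes with different labels'' deletes exactly the hyperedges $HS^2$-point would delete. This is what the coupling requires.

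It then remains to account for queries. Each classification of a node costs $HS^2$-point one pointwise query and costs $HS^2$-pair at most $\#\mathrm{c}\le k$ pairwise queries. Consequently, if Algorithm~\ref{alg:S2_pair} is run with budget $\mathcal{Q}(\delta)>k\,\mathcal{Q}^{\ast}(\delta)$, where $\mathcal{Q}^{\ast}(\delta)$ is the quantity in~\eqref{eq:qcom}, then — since all queries are incurred inside classifications and the budget check fires only after a classification — it is guaranteed to perform strictly more than $\mathcal{Q}^{\ast}(\delta)$ classifications before it halts. By the coupling, after that many classifications the current connected components of $G$ are exactly the ones $HS^2$-point would obtain after more than $\mathcal{Q}^{\ast}(\delta)$ pointwise queries, and Theorem~\ref{thm:1} guarantees that these are the correct cut set $C$ with probability at least $1-\delta$. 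Substituting the explicit form of $\mathcal{Q}^{\ast}(\delta)$ yields the stated bound $k\mathcal{Q}^{\ast}(\delta)$.

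The genuinely delicate step is the coupling: one must verify that the ``good event'' of Theorem~\ref{thm:1} — an event measurable with respect to the uniform node draws, with $G$ and $f$ fixed — has the same probability under both algorithms, so that the $1-\delta$ guarantee transfers verbatim. The remaining ingredients, namely the per-classification bound $\#\mathrm{c}\le k$ and the resulting factor-$k$ inflation of the budget, are immediate, and there is no new probabilistic content beyond Theorem~\ref{thm:1}.
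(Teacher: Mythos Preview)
Your proposal is correct and follows essentially the same approach the paper sketches: the paper does not give a formal proof of Corollary~\ref{cor:pairwise} but simply observes that each node classification costs at most $k$ pairwise queries and invokes Theorem~\ref{thm:1}, which is exactly the coupling-plus-factor-$k$ argument you have written out (with more care about the label renaming and the budget accounting than the paper itself provides).
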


\subsection{Noisy case}
We consider next the setting with a noisy pairwise oracle. The key idea is similar to the one used in the noiseless case: we first identify seed nodes for the different classes. Due to the noise, however, we need to identify a sufficiently large number of nodes within each class during Phase 1 so that the classification procedure in Phase 2 has high confidence. To achieve this, we adopt a similar strategy as used in Algorithm 2 of~\cite{mazumdar2017clustering} in Phase 1, which can correctly classify a group of vertices into different clusters with high probability based on pairwise queries as long as the size of each cluster is not too small. Phase 2 reduces to classifying the remaining nodes. In contrast to the noiseless case, we adopt a normalized majority voting strategy: we will compare the ratios of the nodes over different classes that claim to have the same label with the incoming node. We list our $HS^2$-pair with noise in Algorithm~\ref{alg:S2_noisypair}. 
\begin{algorithm2e}
\caption{$HS^2$-pair with noise}\label{alg:S2_noisypair}
\SetAlgoLined
\SetKwInOut{Input}{Input}
\SetKwRepeat{Do}{do}{while}
\SetKwInOut{Output}{Output}
\SetKwInput{Mainalgo}{Main Algorithm}
\SetKwInput{Phaseone}{Phase 1}
\SetKwInput{Phasetwo}{Phase 2}
  \Input{A hypergraph $G$, query complexity budget $\mathcal{Q}(\delta)$, parameter $M$
  }
  \Output{A partition of $V$
  }
    \Phaseone{ \\ 
        Uniformly at random sample $M$ nodes from $G$\; \\
        Use Algorithm 2 in \cite{mazumdar2017clustering} on these $M$ nodes to get a partition $S_1,...,S_k$. Let $S = \bigcup_{i=1}^{k}S_i$\;
    }
    \Phasetwo{ \\ 
        $L\leftarrow \{(v, i)| v\in S_i, i\in[k]\}$\; \\
        Remove all hyperedges whose containing different labels from $G$\; 
        \While{1}{
            Uniformly at random sample an unlabeled node $v$\;
            \Do{$x\leftarrow MSSP(G,L)$ exists}{
                $M_i \leftarrow |\{u\in S_i| \mathcal{O}_{p}(u,v) = 1\}|$ for all $i\in[k]$\;
                $i^* \leftarrow \arg\max_{i\in [k]} M_i/|\hat{S}_i|$, add $(v, i^*)$ to $L$\;
                Remove all hyperedges that contain different labels from $G$\;
                \If{more than $\mathcal{Q}(\delta)$ queries are used}{
                     Return the remaining connected components of $G$
                }
            }
        }
    }
\end{algorithm2e}

We know describe the query complexity of Algorithm~\ref{alg:S2_noisypair}.
\begin{theorem}\label{thm:noisycase} 
  Suppose $G=(V,E)$ is $\beta$-balanced. The cut set $C$ induced from a label function $f$ is $\kappa$-clustered and has $m$ non-empty cut components. Then for any $\delta>0,p<\frac{1}{2}$, Algorithm~\ref{alg:S2_noisypair} will recover $C$ exactly with probability at least $1-\delta$ if $\mathcal{Q}(\delta)$ is larger than
  \begin{equation}\label{thm:noisymaineq}
     \mathcal{Q}^{\ast}(\delta/4)M + \frac{128Mk^2\log M}{(2p-1)^4}
  \end{equation}
  where $\mathcal{Q}^{\ast}(\delta)$ is defined in \eqref{eq:qcom},
  and $M$ is an integer satisfying 
  \begin{equation}\label{thm:4req}
    \begin{split}
         & \frac{M}{\log M} \geq \frac{128 k}{\beta (2p-1)^4},\;M \geq \frac{12}{\beta}\log\frac{4k}{\delta},\\
         &\;M \geq \frac{8}{\delta},\;M \geq \frac{2}{\beta D(0.5||p)}\log\frac{8(k-1)\mathcal{Q}^{\ast}(\delta/4)}{\delta}.\\
    \end{split}
  \end{equation}
  Here $D(p||q)$ denotes the KL-divergence of two Bernoulli distributions with parameters $p$ and $q$.
\end{theorem}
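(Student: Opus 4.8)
The plan is to analyze the two phases of Algorithm~\ref{alg:S2_noisypair} separately and then combine their guarantees by a union bound, allotting a $\delta/4$ fraction of the failure probability to four bad events; write $S=\bigcup_{i=1}^k S_i$ and $S_{\min}=\min_{i\in[k]}|S_i|$ for the seed clusters produced in Phase~1. For Phase~1, I would first control how the $M$ uniformly sampled nodes split among the true classes: since $G$ is $\beta$-balanced each class is sampled in expectation at least $\beta M$ times, so a Chernoff bound together with $M\ge\frac{12}{\beta}\log\frac{4k}{\delta}$ shows that, except on an event of probability $\le\delta/4$, every true class contributes at least $\beta M/2$ of the sampled nodes. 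On that event the requirement $\frac{M}{\log M}\ge\frac{128k}{\beta(2p-1)^4}$ forces $S_{\min}\ge \beta M/2 \ge \frac{64k\log M}{(2p-1)^4}$, which is exactly the minimum-cluster-size hypothesis under which Algorithm~2 of~\cite{mazumdar2017clustering} succeeds; invoking its guarantee --- whose failure probability the condition $M\ge 8/\delta$ drives below $\delta/4$ --- yields that, except on a further event of probability $\le\delta/4$, Phase~1 returns the correct partition of the sampled nodes with every $|S_i|\ge\beta M/2$, using at most $\frac{128Mk^2\log M}{(2p-1)^4}$ queries, so the cut hyperedges removed at the start of Phase~2 are genuine cut hyperedges of $f$.

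Condition now on Phase~1 succeeding. The key structural observation is that, as long as every node classified in Phase~2 is given its true label, Phase~2 is exactly $HS^2$-point run on $G$ with the nodes of $S$ labeled first; since pre-labeling a batch of nodes can only shorten the subsequent random-sampling and aggressive-search steps, the proof of Theorem~\ref{thm:1} carries over and shows that with probability $\ge 1-\delta/4$ the algorithm recovers all of $C$ after classifying at most $\mathcal{Q}^{\ast}(\delta/4)$ nodes. Each classification issues the queries $\mathcal{O}_p(u,v)$ for all $u\in S$, i.e.\ at most $|S|\le M$ of them, so Phase~2 consumes at most $\mathcal{Q}^{\ast}(\delta/4)\,M$ queries; added to the Phase~1 count this matches the budget in~\eqref{thm:noisymaineq}, so on the good event the early-termination rule is never triggered.

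It remains to certify the normalized majority vote. Fix a node $v$ with true label $j$: for $u\in S_j$ the coin $\mathcal{O}_p(u,v)$ is $1$ with probability $1-p>\tfrac12$, for $u\in S_i$ with $i\ne j$ it is $1$ with probability $p<\tfrac12$, and these coins are mutually independent, so a KL/Chernoff bound gives $P(M_j/|S_j|\le\tfrac12)\le e^{-|S_j|D(0.5||p)}$ and $P(M_i/|S_i|\ge\tfrac12)\le e^{-|S_i|D(0.5||p)}$; hence $v$ is misclassified with probability at most $(k-1)e^{-S_{\min}D(0.5||p)}\le (k-1)e^{-\frac{\beta M}{2}D(0.5||p)}$. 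Because $v$ is selected (and the seed sets $S_i$ are fixed) before the coins $\{\mathcal{O}_p(u,v):u\in S\}$ are revealed, and these coins are independent of the entire prior history, this bound is valid conditionally, so a union bound over the at most $\mathcal{Q}^{\ast}(\delta/4)$ nodes processed in Phase~2 --- which is precisely where $M\ge\frac{2}{\beta D(0.5||p)}\log\frac{8(k-1)\mathcal{Q}^{\ast}(\delta/4)}{\delta}$ is used --- keeps the total misclassification probability below $\delta/4$. Summing the four bad events of probability $\le\delta/4$ gives overall success probability $\ge 1-\delta$.

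The step I expect to be the main obstacle is the last one: the set of nodes that actually get classified in Phase~2 is random and adapted to the run of the algorithm, so one must argue that when a node is first processed the relevant noise coins are still ``fresh'', which is what makes the conditional per-node bound legitimate and the union bound over $\mathcal{Q}^{\ast}(\delta/4)$ rather than $n$ nodes valid --- and this is what pins down the exact logarithmic factor demanded of $M$. A secondary but non-trivial point is importing Algorithm~2 of~\cite{mazumdar2017clustering} as a black box cleanly, i.e.\ translating its minimum-cluster-size hypothesis and failure probability into the stated conditions on $M$ through the sample-balancedness bound.
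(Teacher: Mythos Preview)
Your proposal is correct and mirrors the paper's proof almost step for step: four $\delta/4$ events, concentration of the class counts among the $M$ sampled nodes plus Theorem~3 of \cite{mazumdar2017clustering} for Phase~1, then coupling Phase~2 with $HS^2$-point and bounding the normalized-majority-vote error via a KL/Chernoff bound and a union bound over the $\mathcal{Q}^{\ast}(\delta/4)$ classified nodes. The only discrepancies are constants --- your per-node misclassification bound $(k-1)e^{-S_{\min}D(0.5\|p)}$ drops a factor of $2$ relative to the paper's $2(k-1)$ (which is what produces the $8$ inside the logarithm in the fourth condition on $M$), and the paper justifies the constant $12$ by treating the Phase~1 sampling explicitly as hypergeometric (splitting into the cases $M\le n/2$ and $M>n/2$) --- while your ``fresh coins''/adaptivity discussion is in fact more explicit than what the paper writes.
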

We only provide a sketch of the proof of Theorem~\ref{thm:noisycase}. The complete proof is postponed to the supplement.
\begin{proof}
(sketch) In Phase 2, we expect to select $\mathcal{Q}^*(\delta_1)$ nodes for labeling, according to Theorem~\ref{thm:1}. This phase may require $M\mathcal{Q}^*(\delta_1)$ queries. To classify all these nodes correctly via normalized majority voting with probability at least $1-\delta_2$, we require each $S_i$ to be large enough. Specifically, via the Chernoff bound and the union bound, we require 
\begin{align}\label{ineq:1}
 \min_{i\in[k]}|S_i| \geq \frac{1}{D(0.5||p)}\log\frac{2(k-1)\mathcal{Q}^*(\delta_1)}{\delta_2}.
\end{align}
To obtain a sufficiently large $|S_i|$, we need to sample a sufficiently large number of points $M$ in Phase 1. With probability $1-k\text{exp}(-M\beta/8)$, we can ensure that
\begin{align}\label{ineq:2}
\min_{i\in[k]} |S_i|\geq \frac{\beta M}{2}.
\end{align}
Combining~\eqref{ineq:1} and~\eqref{ineq:2} gives the fourth inequality in \eqref{thm:4req}. Moreover, we also need to cluster these $S_i$ correctly via Algorithm 2 in~\cite{mazumdar2017clustering}, which requires the first three constrains in \eqref{thm:4req} and the additional $\frac{128Mk^2\log M}{(2p-1)^4}$ queries according to Theorem 3 in~\cite{mazumdar2017clustering}. This gives the formulas in Theorem~\ref{thm:noisycase}.
\end{proof}
\begin{remark}
    Suppose that the parameters ($p,k,\delta,\beta$) are constants. Then, the fourth requirement of $M$ in \eqref{thm:4req} reduces to $M = O(\log(\mathcal{Q}^{\ast}(\delta)))$, and the overall query complexity equals $O(\mathcal{Q}^{\ast}(\delta)\log(\mathcal{Q}^{\ast}(\delta)))$. Comparing this to Theorem~\ref{thm:1} and Corollary~\ref{cor:pairwise}, we only need      $O(\log(\mathcal{Q}^{\ast}(\delta)))$ times more queries for the setting with the noisy pairwise oracle. 
\end{remark}

Recall the perfect partitioning according to the labels follows $V=\bigcup_{i=1}^T V_i$. If we additionally assume that $T$ equals the number of classes $k$, Phase 1 of Algorithm~\ref{alg:S2_noisypair} will guarantee to sample at least one node from each $V_i,i\in[T]$. This observation allows us to get rid of the random sampling procedure in Phase 2. So the first term in $\mathcal{Q}^*(\delta)$ essentially vanishes. We may achieve the following tighter result.
\begin{corollary}
  Suppose $G=(V,E)$ is $\beta$-balanced. The cut set $C$ induced from a label function $f$ is $\kappa$-clustered and $m$ non-empty cut components. Moreover, suppose $T=k$. Then, for any $\delta>0,p<\frac{1}{2}$, Algorithm~\ref{alg:S2_noisypair} will recover $C$ exactly with probability at least $1-\delta$ if $\mathcal{Q}(\delta)$ is larger than
    \begin{align*}
        \mathcal{Q}_1^{\ast}M + \frac{128Mk^2\log M}{(2p-1)^4}
    \end{align*}
    where
    \vspace{-0.5cm}
    \begin{align*}
        \mathcal{Q}_1^{\ast} & = m(\lceil\log_2(n)-\log_2(\kappa)\rceil)\\
       &  + \min(|\partial C|,|C|) (\lceil\log_2(\kappa)\rceil+1),
    \end{align*}
    and now $M$ is the smallest integer satisfying
    \begin{equation*}
    \begin{split}
        & \frac{M}{\log M} \geq \frac{128 k}{\beta (2p-1)^4},\;M \geq \frac{12}{\beta}\log\frac{3k}{\delta},\\
         &\;M \geq \frac{6}{\delta},\;M \geq \frac{2}{\beta D(0.5||p)}\log\frac{6(k-1)\mathcal{Q}_1^{\ast}}{\delta}.
    \end{split}
  \end{equation*}
\end{corollary}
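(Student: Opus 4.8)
The plan is to replay the proof of Theorem~\ref{thm:noisycase}, using the hypothesis $T=k$ to erase the random-sampling term from the Phase~2 budget and, as a result, to free up one slice of the failure probability $\delta$.

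First I would record the combinatorial fact underlying the corollary. Each $V_i$ is a connected component of $G$ after all cut hyperedges are deleted, and a surviving (non-cut) hyperedge joins only equally-labeled nodes; propagating a label along a path of such hyperedges shows that every $V_i$ is monochromatic, so the assumption $T=k$ means the $T$ components are in bijection with the $k$ classes, exactly one component per class. Hence it suffices for Phase~1 to sample at least one node of each class. By $\beta$-balancedness, a Chernoff bound together with a union bound over the $k$ classes shows that, with probability at least $1-\delta/3$ whenever $M\ge \frac{12}{\beta}\log\frac{3k}{\delta}$, every class is represented among the $M$ sampled nodes and in fact each class contributes at least $\beta M/2$ of them. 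Under the first three constraints on $M$, Algorithm~2 of~\cite{mazumdar2017clustering} then partitions these $M$ nodes by their true class with probability at least $1-\delta/3$, at a cost of $\frac{128Mk^2\log M}{(2p-1)^4}$ queries; so, up to relabeling, the seed set $S=\bigcup_{i\in[k]}S_i$ contains a node of every component $V_i$ and satisfies $\min_{i}|S_i|\ge \beta M/2$.

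Next I would show that once $S$ seeds every component, the random-sampling step of Phase~2 is never needed, so the node count of Phase~2 is deterministic. The key observation is that as long as some cut hyperedge $e\in C_{ij}$ has not yet been removed there is still a path joining the seed in $S_i$ to the seed in $S_j$: walk inside $V_i$ (which stays intact, since it consists only of non-cut hyperedges that the algorithm never removes) to a node of $e\cap V_i$, cross $e$, and walk inside $V_j$ to the seed in $S_j$; since these seeds carry different labels, $MSSP(G,L)$ returns a valid midpoint and the aggressive search keeps making progress until $C$ is fully recovered. Therefore the number of nodes labeled in Phase~2 is bounded by the aggressive-search part of the proof of Theorem~\ref{thm:1}, namely $\mathcal{Q}_1^{\ast}=m\lceil\log_2 n-\log_2\kappa\rceil+\min(|\partial C|,|C|)(\lceil\log_2\kappa\rceil+1)$, and this bound is deterministic, consuming no part of $\delta$. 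Since each node labeled in Phase~2 is compared against all $|S|\le M$ seeds, Phase~2 uses at most $\mathcal{Q}_1^{\ast}M$ queries, which gives the claimed total $\mathcal{Q}_1^{\ast}M+\frac{128Mk^2\log M}{(2p-1)^4}$.

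Finally I would check the normalized-majority vote of Phase~2: for a node $v$ of true class $j$, $M_j/|S_j|$ concentrates near $1-p$ while each $M_i/|S_i|$ with $i\ne j$ concentrates near $p<1-p$, so a Chernoff/KL estimate makes the vote correct provided $\min_i|S_i|\ge \frac{1}{D(0.5||p)}\log\frac{6(k-1)\mathcal{Q}_1^{\ast}}{\delta}$; a union bound over the at most $\mathcal{Q}_1^{\ast}$ nodes labeled in Phase~2 keeps the failure probability of this step below $\delta/3$, and combining the size requirement with $\min_i|S_i|\ge \beta M/2$ from Phase~1 yields exactly the fourth constraint on $M$. A union bound over the three failure events (Phase~1 sampling, Phase~1 clustering, Phase~2 voting) then closes the argument at total failure probability $\delta$. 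The only genuinely new ingredient, and the step I expect to be the main obstacle, is the ``no-stalling'' claim for Phase~2: one has to revisit the structure of the proof of Theorem~\ref{thm:1} and verify that seeding every component lets the aggressive search terminate with the complete cut set using only $\mathcal{Q}_1^{\ast}$ labels, i.e.\ that the long-path correction term $m\lceil\log_2 n-\log_2\kappa\rceil$ and the per-hyperedge term $\min(|\partial C|,|C|)(\lceil\log_2\kappa\rceil+1)$ carry over verbatim while the random-sampling term $\frac{\log(1/(\beta\delta))}{\log(1/(1-\beta))}$ simply disappears.
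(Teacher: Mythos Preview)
Your proposal is correct and follows the same route the paper takes: the paper's entire justification for this corollary is the one-paragraph observation preceding its statement, namely that under $T=k$ Phase~1 already places at least one labeled node in every component $V_i$, so the random-sampling term of $\mathcal{Q}^{\ast}$ drops out and $\delta$ need only be split three ways instead of four. Your write-up is simply a careful expansion of that observation along the template of the proof of Theorem~\ref{thm:noisycase}, including the ``no-stalling'' argument that the paper leaves implicit.
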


    In the end of this section, we remark on the CE method in the setting with the pairwise oracle. One still may first apply CE to obtain a graph $G^{(ce)}$ and then run Algorithms \ref{alg:S2_pair} and \ref{alg:S2_noisypair} over $G^{(ce)}$. Corollary~\ref{cor:pairwise} and Theorem~\ref{thm:noisycase} again indicate, the query complexity depends on $\min\{|C|, |\partial C|\}$; By using Proposition \ref{prop:C_ce}, we can again demonstrate the superiority of our proposed approaches over CE-based methods. 

\section{Experiments}\label{sec:exp}
In this section, we evaluate the proposed $HS^2$-based algorithms on both synthetic data and real data. We mostly focus on demonstrating the benefit of $HS^2$ in handling the high-order structures. For the setting with a pointwise oracle, we compare $HS^2$-point with some GAL algorithms including the original $S^2$~\cite{dasarathy2015s2} and EBM~\cite{gu2012towards}, a greedy GAL algorithm based on error bound minimization. To make these GAL algorithms applicable to our high-order data, we first transform hypergraphs into standard graphs by clique expansion which was introduced in Section~\ref{subsec:CE}. For the setting with a pairwise oracle, as there are no other model-free algorithms even for GAL to the best of our knowledge,
we compare $HS^2$-pair over hypergraphs with the combination of clique expansion and $HS^2$-pair over graphs (termed CE + $S^2$-pair later).  All the results are summarized over 100 times independent tests. 

\subsection{Synthetic data}
For the synthetic data experiments, we investigate the effects of the scale of hypergraphs $n$, the number of classes $k$ on all proposed algorithms. We generate labeled hypergraphs according to stochastic block models. The definition of stochastic block models for hypergraphs (HSBM) is not unique, and we adopt the one proposed in \cite{kim2018stochastic} while we allow the number of clusters can be greater than two. Specifically, we fix the sizes of all hyperedges to be 3, which leads to 3-uniform hypergraphs. In addition, we also restrict each cluster to be equal-sized. For each inner-cluster triple of nodes, we generate a hyperedge with probability 0.8. For each intra-cluster triple of nodes, we generate a hyperedge with probability 0.2. 

For the experiments on pointwise queries, 
the results are shown in Figure~\ref{simul:pointwise}. As it shows, the $HS^2$-point outperforms the original $S^2$ and EBM with nontrivial gains. Here, we merely use 3-uniform hypergraphs. If we use hypergraphs with larger hyperedges, the gain of our algorithms will be even greater. In general, the $HS^2$-point will perform better than the other two algorithms if $\min(|C|, |\partial C|)$ becomes much smaller than $\min(|C^{(ce)}|, |\partial C^{(ce)}|)$. What is out of our expectation is the almost linear relation between the query complexity and the scale $n$. This is caused by our particular setting to generate hypergraphs that makes the query complexity be dominated by the last term in~\eqref{eq:qcom}.

\begin{figure}[h]
        \includegraphics[trim={0cm 0cm 1.5cm 1.3cm},clip, width=0.49\columnwidth]{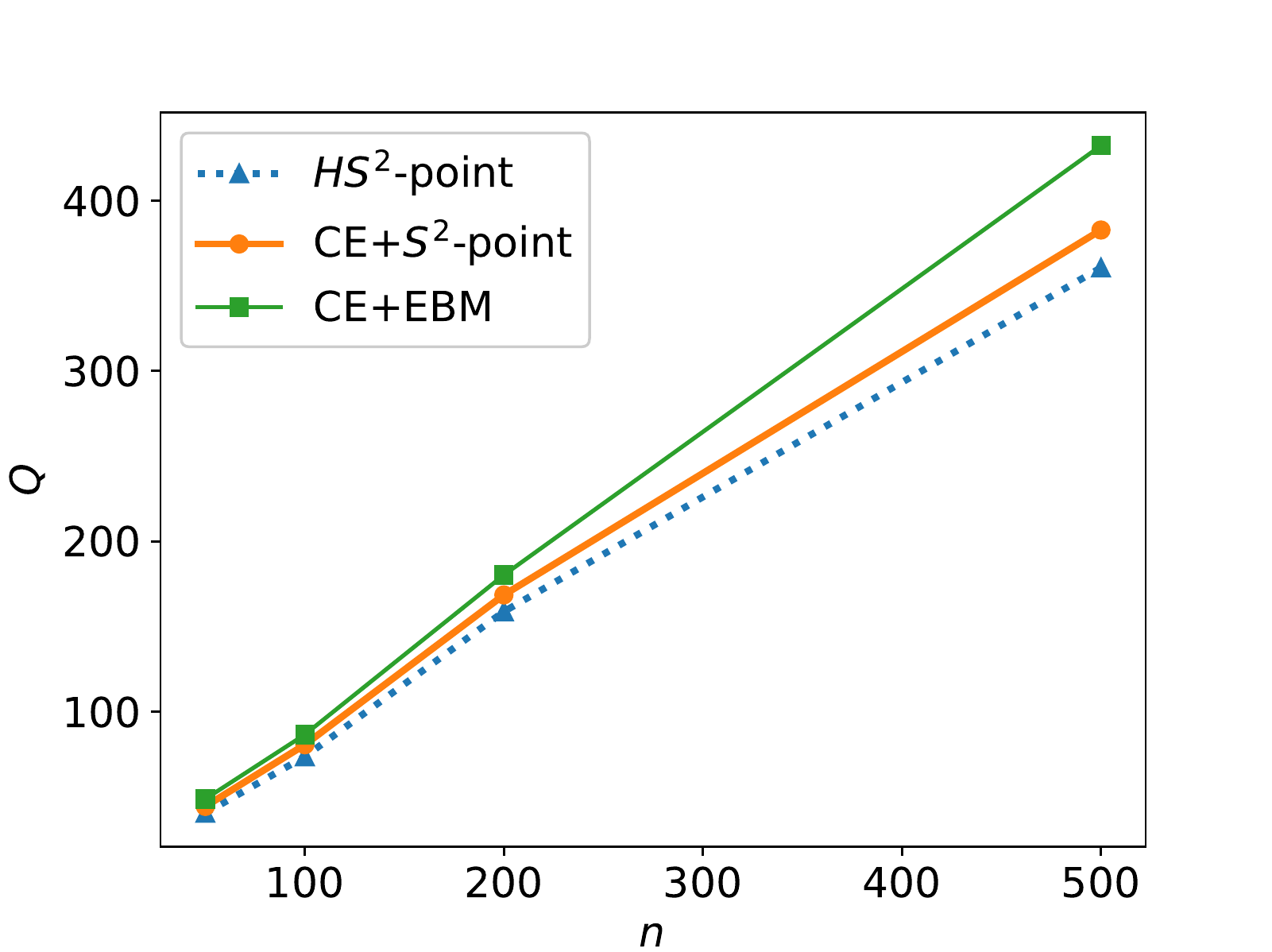}
        \includegraphics[trim={0cm 0cm 1.5cm 1.3cm},clip, width=0.49\columnwidth]{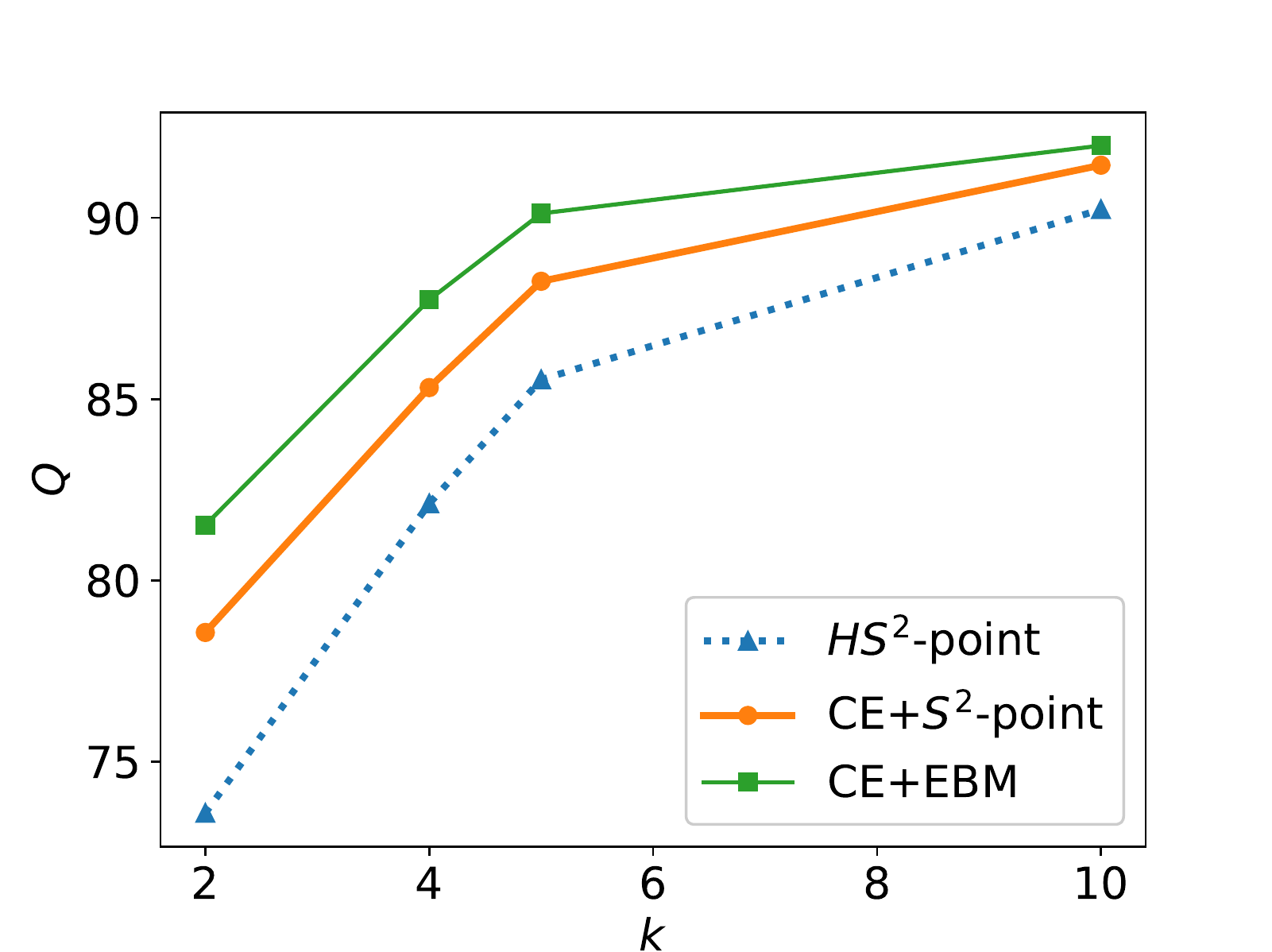}
    \caption{Simulation results with pointwise oracles over synthetic data. Left: query complexity vs scale of hypergraphs $n$ with fixed $k=2$; Right: query complexity vs the number of classes $k$ with fixed $n=100$.}
    \label{simul:pointwise}
    \vspace{-0.2cm}
\end{figure}
For the experiments on pairwise queries, due to the page limitation, we only show the results for $HS^2$-pair in the noiseless case (Algorithm~\ref{alg:S2_pair}). In the real data part, we will evaluate $HS^2$-pair with both noisy and noiseless oracles. The results are described in Figure \ref{simul:pairwise}. Again, our $HS^2$-pair algorithm works better than the naive combination of CE and $S^2$-pair. Also, different from the pointwise case in Figure~\ref{simul:pointwise}, we can see that the query complexity increases almost linearly in the number of classes. This is because we need to use almost $k$ pairwise queries to identify the label of one node.
\begin{figure}[h]
        \includegraphics[trim={0cm 0cm 1.5cm 1.3cm},clip, width=0.49\columnwidth]{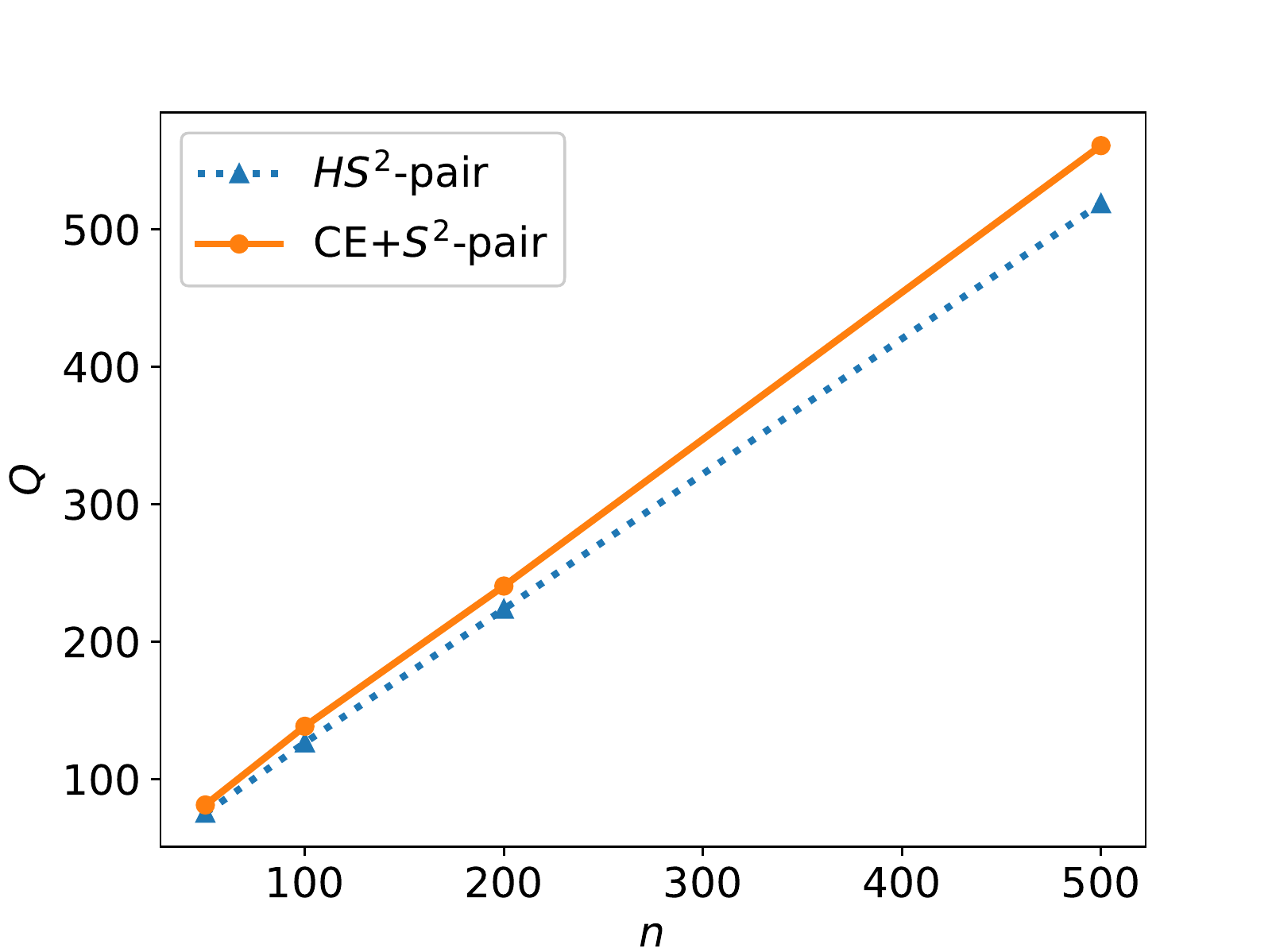}
        \includegraphics[trim={0cm 0cm 1.5cm 1.3cm},clip, width=0.49\columnwidth]{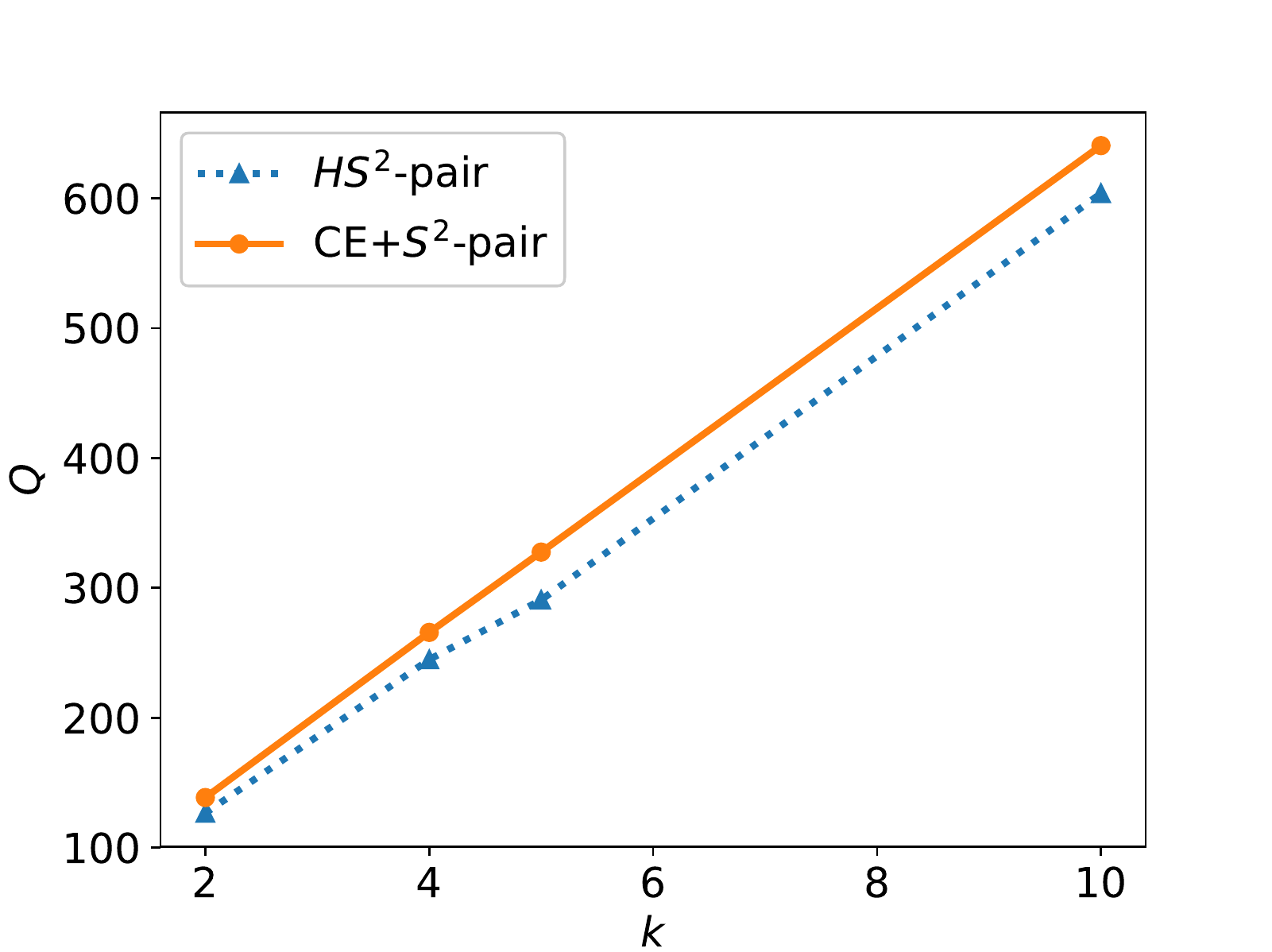}
    \caption{Simulation results with pairwise oracles over synthetic data. Left: query complexity vs scale of hypergraphs $n$ with fixed $k=2$; Right: query complexity vs the number of classes $k$ with fixed $n=100$.}
    \label{simul:pairwise}
    \vspace{-0.5cm}
\end{figure}

\subsection{Real Data}
We also test our algorithms on a real dataset CIFAR-100~\cite{krizhevsky2009learning} which is widely used in machine learning research. Each image in this dataset is represented by a 3072-dimensional feature vector associated with a coarse label as well as a fine label. We construct 3-uniform similarity hypergraphs, where each node corresponds to a image and each hyperedge corresponds to a two-nearest-neighbor relation. Specifically, a hyperedge that connects images $i$, $j$ and $k$ if and only if $j$ and $k$ are $i$'s two nearest neighbors. Note that for real data experiments, in each setting, we fix hypergraphs for all the 100 trials once they are generated. The randomness over the 100 trials still occurs during the execution of different algorithms. These repetitious experiments allows us to give a more in-depth and robust analysis of all algorithms.

\textbf{Noiseless cases.} For the noiseless case, we construct two hypergraphs, of which each is based on 500 images sampled uniformly at random from two mega-classes. These two hypergraphs correspond to binary classification ``fish vs flower'' and ``people vs food container'' respectively. The parameters of these hypergraphs as well as their clique-expansion counterparts are shown in Table~\ref{par:small}. Note that in ``fish vs flower'', $\min(|C|, |\partial C|)=\min(|C^{(ce)}|, |\partial C^{(ce)}|)$ while in ``people vs food container'', $\min(|C|, |\partial C|)<\min(|C^{(ce)}|, |\partial C^{(ce)}|)$.
\begin{table}[h]
\captionsetup{width=\columnwidth}
\centering
\begin{tabular}{|p{1.5cm}<{\centering}|p{0.5cm}<{\centering}|p{0.3cm}<{\centering}|p{0.5cm}<{\centering}|p{0.5cm}<{\centering}|p{1cm}<{\centering}|p{0.75cm}<{\centering}|}
\hline
     & $n$ & $k$  & $|\partial C|$ & $|C|$ & $|\partial C^{(ce)}|$ & $|C^{(ce)}|$\\ \hline
\tiny\shortstack{fish vs\\ flower} & 500 & 2 & 427                     & 435  & 427 & 672\\ \hline
\tiny\shortstack{people vs\\ food container} & 500 & 2 &  412                    &   398 &  412 &  537\\ \hline
\end{tabular}
\caption{Parameters of hypergraphs and their CE counterparts (CIFAR-100, noiseless oracles)}
\label{par:small}
\vspace{-0.5cm}
\end{table}

For the case with pointwise oracles, the results are shown in Table~\ref{real:point}. According to the results, CE can cause information loss and require more queries in general. When comparing the tendency of results between these two hypergraphs, we can see a more significant improvement of $HS^2$-point over the other two in the experiment ``people vs food container'', which further demonstrates our statement on the importance of the parameter $\min(|C|, |\partial C|)$ in Theorem~\ref{thm:1}. 
\begin{table}[h]
\captionsetup{width=\columnwidth}
\centering
\begin{tabular}{|p{1.5cm}<{\centering}|p{1.6cm}<{\centering}|p{2.1cm}<{\centering}|p{1.5cm}<{\centering}|}
\hline
     &  \footnotesize$HS^2$-point  & \footnotesize CE+$S^2$-point & \footnotesize CE+EBM\\ \hline
\tiny\shortstack{fish vs\\ flower} & 438.65 & 445.67 & 460.72\\ \hline
\tiny\shortstack{people vs\\ food container} & 400.89 &421.75 &450.24\\ \hline
\end{tabular}
\caption{Query complexity with the pointwise oracles}
\label{real:point}
\vspace{-0.5cm}
\end{table}

The experimental results of the case with pairwise oracles are shown in Table~\ref{real:pair}. Again, $HS^2$ outperforms the combination of CE and $S^2$.
\begin{table}[htbp!]
\captionsetup{width=\columnwidth}
\centering
\begin{tabular}{|p{1.5cm}<{\centering}|p{2.3cm}<{\centering}|p{2.8cm}<{\centering}|}
\hline
     & \scriptsize noiseless $HS^2$-pair & \scriptsize CE+noiseless $S^2$-pair\\ \hline
\tiny\shortstack{fish vs\\ flower} & 645.72 & 668.39\\ \hline
\tiny\shortstack{people vs\\ food container} & 592.34 &621.51\\ \hline
\end{tabular}
\caption{Query complexity with noiseless pairwise oracles}
\label{real:pair}
\vspace{-0.5cm}
\end{table}

\textbf{Noisy cases.} To test $HS^2$-pair with the noisy oracle, we need to construct larger hypergraphs to illustrate the results, of which each contains $n=5000$ nodes. This is because the number of points required in phase 1 is large, where we set $M$ to be 2000. The parameters of the hypergraphs as well as their CE counterparts are shown in Table~\ref{real:big}.
\begin{table}[htbp!]
\captionsetup{width=\columnwidth}
\centering
\begin{tabular}{|p{1.5cm}<{\centering}|p{0.5cm}<{\centering}|p{0.4cm}<{\centering}|p{0.5cm}<{\centering}|p{0.5cm}<{\centering}|p{0.9cm}<{\centering}|p{0.9cm}<{\centering}|}
\hline
     & $n$  & $k$ &$|\partial C|$ & $|C|$ & $|\partial C^{(ce)}|$ & $|C^{(ce)}|$\\ \hline
\tiny\shortstack{fish vs\\ flower} & \scriptsize 5,000 & \scriptsize 2&\scriptsize4,377                      & \scriptsize4,576  & \scriptsize4,377 &\scriptsize7,314\\ \hline
\tiny\shortstack{people vs\\ food container} & \scriptsize5,000 & \scriptsize 2&\scriptsize4,349                      &  \scriptsize4,214 & \scriptsize4,349 & \scriptsize 6,808\\ \hline
\end{tabular}
\caption{Parameters of hypergraphs and their CE counterparts (CIFAR-100, noisy oracles)}
\label{real:big}
\vspace{-0.5cm}
\end{table}

The results for the case with noisy pairwise oracle are shown in Table \ref{real:faulty}. As expected, $HS^2$ is better than the combination of CE and $S^2$ in terms of query complexity.

\begin{table}[htbp!]
\captionsetup{width=\columnwidth}
\centering
\begin{tabular}{|p{1.5cm}<{\centering}|p{2.3cm}<{\centering}|p{2.7cm}<{\centering}|}
\hline
     &\footnotesize $HS^2$ with noise & \footnotesize  CE+$S^2$ with noise \\ \hline
\tiny\shortstack{fish vs\\ flower} & \footnotesize  8,256,438&\footnotesize 8,754,359 \\ \hline
\tiny\shortstack{people vs\\ food container} &\footnotesize  7,703,549 & \footnotesize 8,487,197\\ \hline
\end{tabular}
\caption{Query complexity with  noisy pairwise oracles}
\label{real:faulty}
\end{table}

\begin{remark}
    According to our theoretical findings, the gains of $HS^2$ over $S^2$+$CE$ depend on how small $|C|$ is compared to $|C_{ce}|$ and $|\partial C|$. Typically, if the given hypergraph consists of a small number of large hyperedges, $|C|$ may become even smaller as opposed to $|C_{ce}|$ and $|\partial C|$. Therefore, in the applications with larger hyperedges, greater gains of $HS^2$ over $S^2$+$CE$ may be expected than the above results that were all obtained over 3-uniform hypergraphs.  Interested readers may refer to the applications in \cite{purkait2017clustering,li2018submodular}.  
\end{remark}




\bibliographystyle{IEEEtran}
\bibliography{Ref}
\clearpage
\section{SUPPLEMENTARY MATERIAL}
\subsection{Proof of Theorem \ref{thm:1}}
We need Lemma 1 of \cite{dasarathy2015s2} which characterizes a bound for the query complexity of the random sampling phase.
We first define a \emph{witness set} of the cut set $C$ as the node set that contains at least one node for each $V_i,i\in[T]$.

\begin{lemma}[Lemma 1 in \cite{dasarathy2015s2}]\label{lma:scqlemma1}
Consider a $\beta$-balancedness graph $G=(V, E)$. For all $\alpha>0$, a subset $W$ chosen uniformly at random is a witness of the cut-set $C$ with probability at least $1-\alpha$ as long as
\begin{align*}
|W|\geq \frac{\log{(\frac{1}{\beta\alpha})}}{\log{(1/(1-\beta))}}
\end{align*}
\end{lemma}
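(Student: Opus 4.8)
The plan is the standard first-moment argument: a union bound over the connected components followed by a geometric-tail estimate. Write $w=|W|$ and recall that $W$ is formed by drawing $w$ nodes uniformly at random from $V$. By the definition of a witness set, the event ``$W$ is not a witness of $C$'' is exactly the event that $W\cap V_i=\emptyset$ for some connected component $V_i$, $i\in[T]$. So first I would apply the union bound,
\begin{align*}
 P(W \text{ is not a witness})\;\le\;\sum_{i=1}^{T} P\big(W\cap V_i=\emptyset\big).
\end{align*}

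Next I would bound a single term $P(W\cap V_i=\emptyset)$. Each node we draw lands in $V_i$ with probability $|V_i|/n\ge\beta$ by $\beta$-balancedness, so if the draws are with replacement the term equals $(1-|V_i|/n)^w\le(1-\beta)^w$; if one prefers to model $W$ as a uniformly random subset of size $w$, the probability is $\binom{n-|V_i|}{w}/\binom{n}{w}$, which is still at most $(1-|V_i|/n)^w$ by comparing the two products factor by factor. Either way every summand is at most $(1-\beta)^w$. To finish bounding the right-hand side I need to control $T$: since the $V_i$ are disjoint and each contains at least $\beta n$ nodes, $T\beta n\le\sum_{i}|V_i|\le n$, hence $T\le 1/\beta$. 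Combining these, $P(W \text{ is not a witness})\le \tfrac1\beta(1-\beta)^w$.

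Finally I would impose $\tfrac1\beta(1-\beta)^w\le\alpha$; this is equivalent to $(1-\beta)^w\le\alpha\beta$, and taking logarithms (note $\log(1-\beta)<0$, so the inequality flips) gives $w\ge \log(1/(\beta\alpha))/\log(1/(1-\beta))$, which is precisely the hypothesis of the lemma. Under that hypothesis $W$ is therefore a witness with probability at least $1-\alpha$. I do not expect a real obstacle here: the only two steps needing a line of justification are the ``without replacement'' comparison and the counting bound $T\le 1/\beta$, both elementary. Indeed this is Lemma~1 of \cite{dasarathy2015s2} and the argument transfers unchanged to the hypergraph setting, since only the vertex partition $\{V_i\}_{i\in[T]}$ enters the proof.
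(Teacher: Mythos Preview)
The paper does not give its own proof of this lemma; it merely quotes it as Lemma~1 of \cite{dasarathy2015s2} and uses it as a black box inside the proof of Theorem~\ref{thm:1}. Your argument---union bound over the $T$ components, the per-term estimate $P(W\cap V_i=\emptyset)\le(1-\beta)^w$, and the counting bound $T\le 1/\beta$---is exactly the standard proof of that lemma and is correct as written, so there is nothing to add.
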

Moreover, we will need the following lemma. Basically it ensures that once $HS^2$-point discovers a cut hyperedge from a cut component, then $HS^2$-point will discover all remaining cut hyperedges in this cut component and the shortest pathes that include these hyperedges are at most with length $\kappa$. 
\begin{lemma}\label{lma:HS2pairNB}
    Suppose a hypergraph $G$ with a cut set $C$ is $\kappa$-clustered. Moreover, suppose $C_{rs}$ is a cut component. If $e\in C_{rs}$ is discovered, which means a pair of nodes $u\in \Omega_r(e),v\in \Omega_s(e)$ are labeled, then at least one remaining cut hyperedge in $C_{rs}$ lies in a path of length at most $\kappa$ from a pair of nodes with labels $r$ and $s$ respectively.
\end{lemma}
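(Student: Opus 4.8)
The plan is to convert $\kappa$-clusteredness into an explicit short path through an as-yet-undiscovered cut hyperedge. Informally: once $e$ is discovered it becomes one of the ``discovered'' nodes of $C_{rs}$ inside the dual digraph $H_\kappa$; strong connectivity of the image of $C_{rs}$ in $H_\kappa$ forces some arc to leave the discovered set, and the inequality $\Delta(e_1,e_2)\le\kappa$ guaranteed by that arc unpacks, via Definition~\ref{def:hypercutedgeDist}, into a path of length at most $\kappa$ whose ``middle'' hyperedge is the desired one.

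First I would handle the degenerate case: if every cut hyperedge of $C_{rs}$ is already discovered, there is nothing to prove. Otherwise, let $D\subseteq C_{rs}$ be the set of discovered cut hyperedges; then $e\in D$ so $D\neq\emptyset$, and $C_{rs}\setminus D\neq\emptyset$. By Definition~\ref{def:hyperkappa}, the nodes of $H_\kappa$ corresponding to $C_{rs}$ induce a strongly connected digraph, so any directed path inside it from $e$ to a node of $C_{rs}\setminus D$ must contain an arc $e_1 e_2\in\mathcal{E}$ with $e_1\in D$ and $e_2\in C_{rs}\setminus D$ (the standard ``crossing arc'' argument: walk the path until it first leaves $D$). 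By construction of $H_\kappa$ this means $\Delta(e_1,e_2)\le\kappa$.

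Next I would unpack this bound. Since $e_1,e_2\in C_{rs}$, the pair $(r,s)$ occurs in the outer supremum of \eqref{def:Deltahyper0}, so
\begin{equation*}
\sup_{v_1\in\Omega_r(e_1)}\inf_{u_1\in\Omega_r(e_2)}d_{sp}^{G-C}(v_1,u_1)+\sup_{v_2\in\Omega_s(e_1)}\inf_{u_2\in\Omega_s(e_2)}d_{sp}^{G-C}(v_2,u_2)+1\le\kappa .
\end{equation*}
Let $u\in\Omega_r(e_1)$ and $v\in\Omega_s(e_1)$ be the labeled pair witnessing $e_1\in D$. Taking $v_1=u$ and $v_2=v$, the two infima are finite and, being over the finite nonempty sets $\Omega_r(e_2),\Omega_s(e_2)$ (nonempty since $e_2\in C_{rs}$), are attained at some $w_r\in\Omega_r(e_2)$, $w_s\in\Omega_s(e_2)$ with $d_{sp}^{G-C}(u,w_r)+d_{sp}^{G-C}(v,w_s)+1\le\kappa$. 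Concatenating a shortest $(G-C)$-path from $u$ to $w_r$, the single hyperedge $e_2$ (which contains both $w_r$ and $w_s$), and a shortest $(G-C)$-path from $w_s$ to $v$ gives a hypergraph path from $u$ to $v$ of length at most $\kappa$; consecutive hyperedges meet because the last hyperedge of the first leg contains $w_r\in e_2$ and $e_2$ contains $w_s$, which lies on the first hyperedge of the last leg. This path contains the still-undiscovered cut hyperedge $e_2\in C_{rs}$, and its endpoints $u$, $v$ carry labels $r$ and $s$, which is precisely the claim.

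The step I expect to require the most care is the quantifier bookkeeping in \eqref{def:Deltahyper0}: we are entitled to substitute the \emph{particular} labeled nodes of $e_1$ into the outer suprema — and thereby inherit the bound $\kappa$ for $u$ and $v$ themselves — exactly because $e_1$ is a discovered cut hyperedge; reversing the roles of the sup and inf would invalidate this. The rest is routine: the crossing-arc fact for strongly connected digraphs, the symmetry and attainment of shortest-path distances on a finite hypergraph, and the (possibly degenerate) check that the three concatenated pieces form a legitimate path in the sense of Definition~\ref{def:hyperpath}.
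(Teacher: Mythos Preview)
Your proposal is correct and follows essentially the same route as the paper: use strong connectivity of $C_{rs}$ in $H_\kappa$ to obtain an arc $e_1e_2$ with $\Delta(e_1,e_2)\le\kappa$, then unpack Definition~\ref{def:hypercutedgeDist} --- the sup over $\Omega_r(e_1),\Omega_s(e_1)$ lets you plug in the specific labeled pair, and the inf over $\Omega_r(e_2),\Omega_s(e_2)$ supplies the short connecting legs. Your crossing-arc argument from the discovered set $D$ to $C_{rs}\setminus D$ is a useful sharpening that the paper glosses over (it only explicitly treats the case of a single discovered $e$), but the underlying idea is identical.
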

\begin{proof}
   By definition \ref{def:hyperkappa}, we know that the hyperedges in $C_{rs}$ will form a strongly connected component in $H_{\kappa}$. This means for any $e\in C_{rs}$, there is at least one $e^{\prime}\in C_{rs}$ such that the arc $ee^{\prime}$ exists in $H_{\kappa}$. Recall there exists arc $ee^{\prime}$ in $H_{\kappa}$ if and only if $\Delta(e,e^{\prime})\leq \kappa$. By definition \ref{def:hypercutedgeDist} this means for any node pair $u\in \Omega_r(e),v\in \Omega_s(e)$, the length of the shortest path including $e^{\prime}$ but excluding $e$ will be less then $\kappa$. Note that in the definition of $\Delta(e,e^{\prime})$, we use the supremum taking over the node set $\Omega_r(e),\Omega_s(e)$. This is because it ensures that no matter which node pair $u,v\in e$ we have, $\Delta(e,e^{\prime})$ can always upper bound the length of the shortest path including $e^{\prime}$ but excluding $e$ with endpoints $u,v$. In contrast, we use the infimum taking over the node set $\Omega_r(e^{\prime}),\Omega_s(e^{\prime})$. This is because it only needs to search for the shortest path. Hence once we find a cut hyperedge $e$ in $C_{rs}$, we are guarantee to find at least one cut hyperedge $e^{\prime}\in C_{rs}$ through a path of length $l_S\leq \kappa$ after we remove $e$.
\end{proof}

Now let us prove Theorem~\ref{thm:1}. The proof uses a similar outline as the proof given in \cite{dasarathy2015s2}. However, we need to take care of the hypergraph structures that are described by the definition \ref{def:hyperpath} to \ref{def:hyperkappa}. We will also derive a tighter bound for the number of runs $R$, which finally yields a lower query complexity than that in \cite{dasarathy2015s2}.

We  can divide the query complexity in two parts which are associated with the random sampling phase and the aggressive search phase respectively. The goal of the random sampling phase is to find a witness set. By applying Lemma \ref{lma:scqlemma1}, we can bound from above the number of random queries needed.

For the aggressive search phase, let $l_{S}(G, L, f)$ be the length of the shortest path among all paths connecting nodes with different labels after we collects the labels of nodes in $L$. After each step of the aggressive search, the $l_{S}(G, L, f)$ will roughly get halved. Thus it will take no more than $\lceil\log_{2}l_{S}(G, L, f)\rceil +1$ steps to find a cut hyperedge. In order to bound the required number of active queries, let us split up the aggressive search phase into ``runs'', where each run ends when a new boundary node has been discovered. Let $R$ be the number of runs, and it's obvious that $R\leq|\partial C|$ since we will at most discovere all the boundary nodes, which is $|\partial C|$. Moreover, we also have $R\leq |C|$. This is because we will discover a new boundary node if and only if we discover at least a cut hyperedge. Hence together we have $R\leq \min(|C|,|\partial C|)$. The observation of $R\leq |C|$ is missed by~\cite{dasarathy2015s2}. However, this part is extremely important for the hypergraph setting according to the later discussion in Remark~\ref{remark:boundR}.

For each $i\in [R]$, let $G^{(i)}$ and $L^{(i)}$ be the graph and label set up to run $i$. Then the total number of active queries can be upper bounded by \\ $\sum_{i=1}^{R}(\lceil\log_2(l_{S}(G^{(i)}, L^{(i)}, f))\rceil+1)$.
By observation, in each run it is trivial $l_{S}(G^{(i)}, L^{(i)}, f)\leq n$. From Lemma \ref{lma:HS2pairNB}, once we discover a cut hyperedge from $C_{rs}$, we're able to find at least one undiscovered cut hyperedge from $C_{rs}$ through a path of length at most $\kappa$ according to Lemma \ref{lma:HS2pairNB}. After running $|C_{rs}|-1$ times, we may fully discover $C_{rs}$.
In all these $|C_{rs}|-1$ runs in $R$, $l_{S}\leq\kappa$. In all, the runs that we first discover each cut components are long runs, whose $l_{S}$ can be upper bounded naively by $n$, and the number of long runs is not greater than $m$. Once we discover the first cut hyperedge in $C_{rs}$, the rest $|C_{rs}|-1$ runs are short runs whose $l_{S}$ can be upper bounded by $\kappa$. 
Therefore, we have
\begin{align*}
&\sum_{i=1}^{R}(\lceil\log_2(l_{S}(G^{(i)}, L^{(i)}, f))\rceil+1)\\
&\leq (R+ m\lceil\log_2 n\rceil +(R-m)\lceil\log_2\kappa\rceil)\\
&\leq m(\lceil\log_2n\rceil-\lceil\log_2\kappa\rceil)+\min(|C|,|\partial C|)(\lceil\log_2\kappa\rceil+1)
\end{align*}
Hence we complete the proof.
\begin{remark}\label{remark:boundR}
Note that in \cite{dasarathy2015s2} they only use the bound $R\leq |\partial C|$ and miss the bound $R\leq |C|$. As they focused on standard graphs, $|C|$ can be lower bounded by $\frac{|\partial C|}{2}$. Therefore, in standard graphs, the bound $R\leq |\partial C|$ will at most loose by a constant factor of $2$. However, in hypergraphs, it's possible that $|C|$ is substantially smaller than $|\partial C|$, when the sizes of hyperedges are large. 
So $R\leq |C|$ is crucial for the tight analysis in the hypergraph scenario.
\end{remark}

\subsection{Proof of Proposition \ref{prop:C_ce}}
    \subsubsection{checking the equal parameters}\label{subsec:checkeq}
    We check the parameters one by one. 
    
    We start from proving that if $C$ is $\kappa$-clustered, then $C^{(ce)}$ is also $\kappa$-clustered.
    We note that performing CE does not change the length of the shortest path of arbitrary node pair $v_1,v_2\in V$. This is because CE will replace a hyperedge by a clique, which makes all nodes in the hyperedge become fully connected. Hence the $C^{(ce)}$ is still $\kappa$-clustered.

    Now, we prove that if $G$ has $m$ non-empty cut components, then $G^{(ce)}$ will also have $m$ non-empty cut components.
    We note that for any non-empty cut component $C_{i,j}$ in $G$, there is at least one hyperedge $e\in C_{i,j}$. By definition, we know that $e\cap V_i \neq \emptyset$ and $e\cap V_j \neq \emptyset$. So after CE, in the clique corresponding to this hyperedge $e$, there must be at least one edge such that one of its endpoint is from $V_i$ and the other one is from $V_j$, which makes $C_{i,j}$ still non-empty in $G^{(ce)}$. On the other hand, for arbitrary $i,j$, the cut component $C_{i,j}$ is empty in $G$ if and only if there is no hyperedge between $V_i,V_j$. Hence $C_{i,j}$ will still be empty in $G^{(ce)}$. Together we show that if there are $m$ non-empty cut components in $G$, there are exactly $m$ non-empty cut components in $G^{(ce)}$.
    
    It is easy to see $G^{(ce)}$ keep $\beta$-balanced as $f$ does not change in CE. 
    
    
    Now, we prove that $|\partial C| = |\partial C^{(ce)}|$. For any $e \in C$, let's denote $e = \{v_1,...,v_d\}$. By definition we know that $v_1,...,v_d \in \partial C$. Suppose $e \in C$ and the nodes $v_1,...,v_d$ can be partitioned into $t$ non-empty set $S_1,...,S_t$ according to their labels. Without loss of generality, let $v_1\in S_1$. Then after CE of $e$ we know that the edges $(v_1,v),v\in S_j,j\in\{2,3,...,t\}$ will be in the set $C^{(ce)}$. By definition of $C^{(ce)}$, we know that all $v\in S_j,j\in\{2,3,...,t\}$ will be in the cut set $\partial C^{(ce)}$. We can repeat the same argument for all nodes in $S_1$ and know that $S_1\subset \partial C^{(ce)}$. In the end, we can show that $\forall v\in e$, $v\in \partial C^{(ce)}$. By definition we also have $\forall v\in e$, $v\in \partial C$. Therefore, we claim that $\partial C = \partial C^{(ce)}$ which furthermore $|\partial C| = |\partial C^{(ce)}|$.
    
    \subsubsection{proof for the inequality}
    
    Now, we prove that $\min(|C|,|\partial C|)\leq \min(|C^{(ce)},|\partial C^{(ce)}|)$. As above, we have proved $|\partial C| = |\partial C^{(ce)}|$. The case when $|\partial C^{(ce)}|\leq|C^{(ce)}|$ is an easy case. So, we only need to prove for the case when $|\partial C^{(ce)}|>|C^{(ce)}|$. We claim that if $|\partial C^{(ce)}|>|C^{(ce)}|$, then $|C|\leq |C^{(ce)}|$, which is proved as follows.
    
    Let us first introduce an auxiliary graph $G^{\prime}$ that can be useful in the proof. $G^{\prime} = (\partial C^{(ce)},C^{(ce)})$ is a subgraph of $G^{(ce)}$ with the node set $\partial C^{(ce)}$ and the edge set $C^{(ce)}$.
    
    In the following, we show that when $|\partial C^{(ce)}|>|C^{(ce)}|$, then it's impossible for $G^{\prime}$ to have any cliques of size greater or equal to 3. Note that by the definition of $C^{(ce)}$ and $\partial C^{(ce)}$, the auxiliary graph $G^{\prime}$ is connected. Moreover, as for the condition $|\partial C^{(ce)}|>|C^{(ce)}|$, we know that the average degree of $G^{\prime}$ is strictly less then $2$. This is because  
    \begin{align*}
    2> \frac{2|C^{(ce)}|}{|\partial C^{(ce)}|} = \frac{\sum_{v\in \partial C^{(ce)}} d_v}{|\partial C^{(ce)}|}
    \end{align*}
    where $d_v$ is the degree of node $v$ in $G^{\prime}$. Hence it's impossible to have any cliques of sizes that are greater than or equal to 3 in $G^{\prime}$. 
    
    By using the above observation and the definition of clique expansion, we know that when $|\partial C^{(ce)}|>|C^{(ce)}|$, all hyperedges in $C$ are actually edges. Equivalently, we have $C = C^{(ce)}$, which implies $|C| = |C^{(ce)}|<|\partial C^{(ce)}|$. This concludes the proof.

    By the end of this subsection, we would like to show that it's possible to have $\min(|C|,|\partial C|)< \min(|C^{(ce)}|,|\partial C^{(ce)}|)$ for some hypergraphs. Let $C$ contain only one hyperedge $e$ such that $|e| = 4$. Then it's obvious to see that $1 = |C|<|C^{(ce)}| = 6$ and $|\partial C^{(ce)}|= |\partial C| = 4$. Hence in this special example we have $\min(|C|,|\partial C|) < \min(|C^{(ce)}|,|\partial C^{(ce)}|)$.

\subsection{Proof of Theorem \ref{thm:noisycase}}
Before we start our proof, we need to prepare preliminary results. 
The first one is Theorem 3 in~\cite{mazumdar2017clustering} that characterizes the theoretical performance of Algorithm 2 in~\cite{mazumdar2017clustering}. 
   \begin{theorem}[Theorem 3 in \cite{mazumdar2017clustering}]\label{thm:arya1}
   Given a set of $M$ points which can be partition into $k$ clusters. 
   The Algorithm 2 in \cite{mazumdar2017clustering} will return all clusters of size at least $\frac{64k\log M}{(1-2p)^4}$ with probability at least $1-\frac{2}{M}$. The corresponding query complexity is $O(\frac{M k^2\log M}{(1-2p)^4})$.
    \end{theorem}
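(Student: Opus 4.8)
The plan is to prove both claims of Theorem~\ref{thm:arya1} — that every cluster of size at least $\frac{64k\log M}{(1-2p)^4}$ is returned with probability $\geq 1-\frac{2}{M}$, and that the total number of queries is $O\!\big(\frac{Mk^2\log M}{(1-2p)^4}\big)$ — by analyzing the only decision the algorithm ever has to make, namely whether two points lie in the same cluster, through a statistic built entirely from \emph{distinct} oracle answers. The crucial modeling constraint is that $\mathcal{O}_p$ is persistent: re-querying a pair returns the same (possibly wrong) bit, so confidence cannot be amplified by repetition and must instead be aggregated over third ``relay'' points. Accordingly, I would first fix the sampled seed set $S_0$ of size $\Theta(k\gamma)$, with $\gamma\triangleq\frac{64k\log M}{(1-2p)^4}$, and use a balanced-sampling Chernoff bound guaranteeing that each cluster of size at least $\gamma$ contributes $\Theta(\gamma)$ points to $S_0$ with probability $1-O(1/M)$; this coverage event accounts for one of the two $1/M$ slacks.

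For the co-membership test, given two points $u,v$ and a relay set $W\subseteq S_0$, I would study the agreement count $Z_{uv}=\sum_{w\in W}\mathbf{1}[\mathcal{O}_p(u,w)=\mathcal{O}_p(v,w)]$. A case analysis of a single relay $w$ shows its agreement probability equals $(1-p)^2+p^2$ whenever $u,v$ lie in the same cluster (for \emph{every} $w$), whereas when $u,v$ lie in different clusters a relay drawn from either of their clusters has agreement probability $2p(1-p)$, the two cases differing by exactly $((1-p)-p)^2=(1-2p)^2$. Hence, if the cluster in question contributes $\gamma$ relays to $W$, the expectation gap of $Z_{uv}$ between the two hypotheses is $\Theta(\gamma(1-2p)^2)$. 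This single squaring of the native margin $(1-2p)$ is precisely the mechanism that turns the usual $(1-2p)^2$ of direct majority voting into the fourth power appearing in the bound.

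Next I would concentrate $Z_{uv}$: conditioned on $W$, the summands are independent across relays and bounded, so a Chernoff/Hoeffding bound gives a misclassification probability at most $\exp\!\big(-\Omega(\mathrm{gap}^2/|W|)\big)$. Since the signal comes only from the $\Theta(\gamma)$ discriminating relays while \emph{all} $|W|=\Theta(k\gamma)$ relays inflate the variance, the exponent is $\Omega\!\big(\frac{(\gamma(1-2p)^2)^2}{k\gamma}\big)=\Omega\!\big(\frac{\gamma(1-2p)^4}{k}\big)$; this ratio of total-to-informative relays is exactly where the extra factor $k$ in the size threshold originates. Substituting $\gamma=\frac{64k\log M}{(1-2p)^4}$ makes the exponent $\geq 64\log M$, so each test fails with probability at most $M^{-3}$ with room to spare, and a union bound over the $O(M^2)$ pairwise tests used to cluster $S_0$ and to assign the remaining points leaves total failure below the second $1/M$, completing the $1-\frac{2}{M}$ guarantee; clusters smaller than $\gamma$ may fail to accumulate enough relays, which is why only clusters above the threshold are certified.

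Finally, for the query count, clustering $S_0$ costs $O(|S_0|^2)=O(k^2\gamma^2)$ queries, and assigning each of the $M$ points by comparing it against the seed set costs $O(Mk\gamma)$; with $\gamma=\Theta(k\log M/(1-2p)^4)$ the second term equals $O\!\big(\frac{Mk^2\log M}{(1-2p)^4}\big)$ and, in the large-cluster regime $M\gtrsim|S_0|$, dominates the first. The hard part will be the concentration step under \emph{persistent} noise: because a single oracle answer $\mathcal{O}_p(u,w)$ is reused across many overlapping statistics $Z_{u\cdot}$, the relevant events are not independent, so I would condition on the entire fixed table of within-$S_0$ answers and run the union bound over the resulting deterministic decisions, and separately verify that querying each outside point against the seed introduces only fresh, independent randomness. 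Controlling this dependence, together with pinning the constants so that the exponent $\gamma(1-2p)^4/k$ comfortably clears $3\log M$, is the technical crux; everything else reduces to the two Chernoff bounds above.
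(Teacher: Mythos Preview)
The paper does not contain a proof of this statement: Theorem~\ref{thm:arya1} is quoted verbatim as ``Theorem~3 in \cite{mazumdar2017clustering}'' and is invoked purely as an external black box in the proof of Theorem~\ref{thm:noisycase}. There is therefore nothing in the present paper to compare your proposal against.

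That said, a brief comment on your sketch on its own merits. The relay-agreement statistic and the resulting $(1-2p)^2$ mean gap per informative relay are exactly the right mechanism, and your accounting for why the threshold scales as $k\log M/(1-2p)^4$ (signal from $\Theta(\gamma)$ relays, variance from $\Theta(k\gamma)$) is clean. One point that does not quite close as written is the sampling step: you take $|S_0|=\Theta(k\gamma)$ and then assert that every cluster of size at least $\gamma$ contributes $\Theta(\gamma)$ points to $S_0$. Under uniform sampling the expected contribution of a size-$\gamma$ cluster is $|S_0|\cdot\gamma/M=\Theta(k\gamma^2/M)$, which is $\Theta(\gamma)$ only when $M=\Theta(k\gamma)$; for $M\gg k\gamma$ the large clusters are badly under-represented in $S_0$ and the concentration exponent you compute collapses. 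The algorithm in \cite{mazumdar2017clustering} avoids this by processing points iteratively and growing clusters rather than by first drawing a small seed and then assigning the rest, so if you want a self-contained proof you will need either to replace the seed-set construction with that iterative scheme or to argue differently about how many informative relays are available when testing a pair.
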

    Basically we use this theorem to analyze Phase 1 of Algorithm~\ref{alg:S2_noisypair}. The next one is a lemma that characterizes a lower bound of the KL divergence of two Bernoulli distributions. 
    \begin{lemma}[\cite{wiki:hoeffding}]\label{lma:KLbound}
    Let us denote $D(x||y)$ be the KL divergence of two Bernoulli distributions with parameters  $x, y \in [0,1]$ respectively. We have
    \begin{equation}
       D(x||y) \geq \frac{(y-x)^2}{2\min\{x,\,y\}}
    \end{equation}
\end{lemma}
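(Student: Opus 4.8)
The plan is to reduce this two-variable inequality to a one-dimensional statement. Fix $x\in[0,1]$ and set $\phi(y)=D(x\|y)=x\ln(x/y)+(1-x)\ln\big((1-x)/(1-y)\big)$, viewed as a function of $y\in(0,1)$ alone. Two elementary facts drive the argument: $\phi(x)=0$, since the KL divergence of a distribution with itself vanishes; and $\phi'(x)=0$, since $x$ is the global minimizer of $\phi$. Indeed a short computation gives $\phi'(y)=(y-x)/\big(y(1-y)\big)$, which is negative for $y<x$ and positive for $y>x$. Thus $\phi$ has a critical point of value $0$ at $y=x$, and the entire content of the lemma is a quadratic lower bound on $\phi$ away from that point.

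First I would invoke Taylor's theorem with the Lagrange form of the remainder at $y=x$: $\phi(y)=\phi(x)+\phi'(x)(y-x)+\frac{1}{2}\phi''(\xi)(y-x)^2=\frac{1}{2}\phi''(\xi)(y-x)^2$ for some $\xi$ strictly between $x$ and $y$. Next I would compute $\phi''(y)=x/y^2+(1-x)/(1-y)^2$, which is positive (so $\phi$ is convex in $y$), and in particular $\phi''(\xi)\ge x/\xi^2$. It then remains to lower bound $x/\xi^2$, equivalently to upper bound $\xi$, using only that $\xi$ lies between $x$ and $y$, in order to reach the target denominator $2\min\{x,y\}$. I would split into the cases $y\le x$ and $y\ge x$, treating the intermediate point separately, and retain the second term $(1-x)/(1-\xi)^2$ in the case where $x/\xi^2$ alone does not suffice.

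The step I expect to be the main obstacle is precisely this last one: converting the coarse estimate $\phi''(\xi)\ge x/\xi^2$ into the stated denominator $2\min\{x,y\}$. A careless bound on $\xi$ only yields a $\max$ rather than a $\min$, so one must be careful about which endpoint $\xi$ can approach and, in one of the cases, keep the $(1-x)/(1-\xi)^2$ contribution; equivalently, one can instead fix $y$ and vary $x$, so that $\phi''$ becomes $1/\big(\xi(1-\xi)\big)$ and the case analysis is cleaner. Once the denominator is pinned down the rest is routine calculus. An alternative that avoids the remainder term entirely is to write $\phi(y)=\int_x^y \phi'(t)\,dt=\int_x^y \frac{t-x}{t(1-t)}\,dt$ and bound the integrand directly; I would fall back on this if the Taylor approach becomes awkward near the endpoints $y\to 0$ or $y\to 1$.
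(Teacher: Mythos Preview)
The inequality you are trying to prove is false as stated. Take $x=\tfrac12$, $y=\tfrac{1}{10}$: then
\[
D(x\|y)=\tfrac12\ln 5+\tfrac12\ln\tfrac{5}{9}\approx 0.511,
\qquad
\frac{(y-x)^2}{2\min\{x,y\}}=\frac{0.16}{0.2}=0.8,
\]
and the claimed bound fails. (With $y=0.01$ the right-hand side exceeds $12$ while the left stays near $1.6$.)

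You in fact put your finger on exactly this when you wrote that ``a careless bound on $\xi$ only yields a $\max$ rather than a $\min$.'' That is not a technical nuisance to be worked around; it is the correct answer. Your Taylor argument, or more cleanly your integral alternative $D(x\|y)=\int_x^y\frac{t-x}{t(1-t)}\,dt$ together with $t(1-t)\le t\le\max\{x,y\}$ on the interval of integration, proves
\[
D(x\|y)\ \ge\ \frac{(y-x)^2}{2\max\{x,y\}},
\]
and no refinement can upgrade $\max$ to $\min$ because the $\min$ version is simply false.

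The paper gives no proof of the lemma; it only cites it. But if you examine how it is actually invoked in the proof of Theorem~\ref{thm:noisycase}, both applications use $\max\{x,y\}$ in the denominator. In the first, $x=p_i/2$, $y=p_i$, and the conclusion $D(p_i/2\|p_i)\ge p_i/8$ is exactly $(y-x)^2/(2y)=(y-x)^2/(2\max\{x,y\})$. In the second, $x=p_i+\tfrac{p_iM}{2(n-M)}>y=p_i$, and the denominator $p_i\bigl(2+\tfrac{M}{n-M}\bigr)$ written there equals $2x=2\max\{x,y\}$. So the $\min$ in the lemma is a typo for $\max$, and your argument (in either form) establishes the corrected statement without difficulty.
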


\begin{remark}
    Note that the bound is tighter than directly using Pinsker's inequality \cite{pinsker1960information} when $y\leq 1/8$.
\end{remark}
    Now we start to prove Theorem \ref{thm:noisycase}. First we will show that Phase 1 of Algorithm~\ref{alg:S2_noisypair} will return the correct partition $S_1,...,S_k$ with high probability. From Theorem \ref{thm:arya1} we know that we have to ensure our sampled $M$ points contain all underlying true clusters with size at least $O(\frac{M k^2\log M}{(1-2p)^4})$. 
    Since we sample these $M$ points uniformly at random, thus $(S_1,...,S_k)$ is the multivariate hypergeometric random vector with parameters $(n,np_1,...,np_k,M)$ and $\forall i,\;p_i=\frac{|\{v\in V| f(v) = i\}|}{n}$. It's well known (\cite{hoeffding1963probability},\cite{skala2013hypergeometric}) that when $M\leq n/2$, the tail bound for the multivariate hypergeometric distribution is
    \begin{equation}\label{concentration0}
        \begin{split}
             & \mathbb{P}(S_i \leq M(p_i-\frac{p_i}{2}))\leq \exp(-MD(\frac{p_i}{2}||p_i)) \\
             & \leq \exp(\frac{-Mp_i}{8}) \\
             & \Rightarrow \mathbb{P}(S_i \leq \frac{M\beta}{2}) \leq \exp(\frac{-M\beta}{8}),
        \end{split}
    \end{equation}
    where we use Lemma \ref{lma:KLbound} for the second inequality. 
    For the case $M\geq n/2$, we could apply trick of symmetry and have (\cite{hoeffding1963probability},\cite{skala2013hypergeometric},\cite{serfling1974probability})
    \begin{align*}
        & \mathbb{P}(S_i
        \leq M(p_i-\frac{p_i}{2}))\\
        &\leq \exp(-(n-M)D(p_i+\frac{p_iM}{2(n-M)}||p_i)) \\
        & \leq \exp(-(n-M)\frac{(\frac{p_iM}{2(n-M)})^2}{p_i(2+\frac{M}{n-M})})\\
        & = \exp(-\frac{p_iM^2}{4(2n-M)}) \\
        & \leq \exp(-\frac{Mp_i}{12}),
    \end{align*}
    where the second inequality is via Lemma \ref{lma:KLbound} and the last inequality uses the assumption $M\geq n/2$. Hence, for all $M\leq n$, we have
    \begin{equation}\label{concentration1}
        \mathbb{P}(S_i \leq \frac{M\beta}{2}) \leq \exp(\frac{-M\beta}{12})
    \end{equation}
    Since we need \eqref{concentration1} holds for all $i$, we apply the union bound over all $k$ events which gives
\begin{equation}\label{pf:minS}
  \mathbb{P}(\bigcap_{i=1}^{k}\{S_i \geq \frac{M\beta}{2}\}) \geq 1- k\exp(\frac{-M\beta}{12})
\end{equation}
Now, we need $M$ is large enough such that $\frac{M\beta}{2}$ meets the requirement of Theorem~\ref{thm:arya1}. Moreover, we also need $M$ to be large enough such that this event holds with probability at least $1-\frac{\delta}{4}$. For the first requirement, we have
\begin{equation*}
  \frac{M\beta}{2} \geq \frac{64k\log M}{(2p-1)^4} \Rightarrow \frac{M}{\log M} \geq \frac{128k}{\beta (2p-1)^4}
\end{equation*}
This is exactly our first requirement on $M$ in \eqref{thm:4req}. For the high probability requirement, we have
\begin{equation*}
  k\exp(\frac{-M\beta}{12}) \leq \frac{\delta}{4} \Rightarrow M\geq \frac{12}{\beta}\log\frac{4k}{\delta}
\end{equation*}
This is exactly the second requirement on $M$ in \eqref{thm:4req}. Moreover, we also need Algorithm 2 of~\cite{mazumdar2017clustering} successfully recover all the true clusters with probability at least $1-\frac{\delta}{4}$, and thus we have
\begin{equation*}
  \frac{2}{M} \leq \frac{\delta}{4} \Rightarrow M\geq \frac{8}{\delta}
\end{equation*}
This is exactly the third requirement on $M$ in \eqref{thm:4req}. \\
Now assume that Algorithm 2 of \cite{mazumdar2017clustering} indeed returns all true clusters. We will analyze Phase 2. 
Start from assuming all $S_i$'s are correctly clustered. Then for any new node $v$, from the algorithm we designed we will query for comparing $v$ with all the $M$ nodes that have been clustered. Before we continue, let us introduce some error events which is useful for the following analysis. Let $Er^{(i)}$ be the event that a node with label $i$ is incorrectly clustered by the normalized majority voting. Let $Er_j^{(i)} = \{\frac{M_j}{|S_j|} > \frac{M_i}{|S_i|}\}$, for $j\neq i$, where $M_j$ is the number of nodes in  $S_j$ that respond positively to the pairwise comparisons with node $v$. Note that we have $M_j \sim Bin(p,|S_j|)$ for $j\neq i$ and $M_i \sim Bin(1-p,|S_i|)$. All these $M_{l}$'s are mutually independent.\\ 
We start from analyzing the normalized majority voting for the unlabeled node $v$. Then we have
\begin{align*}
    & \mathcal{O}_p(v,u)\sim Ber(1-p)\;\forall u \in S_i;\\
    & \mathcal{O}_p(v,u)\sim Ber(p)\;\forall u \notin S_i
\end{align*}
where we recall that $\mathcal{O}_p(x,y)$ is the query answer for the point pair $(x,y)$ from the noisy oracle $\mathcal{O}_p$. 
 So the error probability $\mathbb{P}(Er^{(i)})$ that we misclassify the point $v$ can be upper bounded by
\begin{equation*}
  \mathbb{P}(Er^{(i)}) \leq (k-1)\max_{j\neq i}\mathbb{P}(Er_j)
\end{equation*}
where we used the union bound.
Moreover, we can upper bound $\mathbb{P}(Er_j^{(i)})$ as following (recall that $p<1/2$)
\begin{align*}
     \mathbb{P}(Er_j^{(i)}) &= \mathbb{P}(\frac{M_j}{|S_j|} > \frac{M_i}{|S_i|}) \\
     &\leq \mathbb{P}(\frac{M_j}{|S_j|} \geq \frac{1}{2})+\mathbb{P}(\frac{1}{2}>\frac{M_j}{|S_j|})
\end{align*}
Let's denote $\lambda = \frac{1}{2}-p > 0$. So we have $\frac{1}{2} = \lambda + p = \bar{p} - \lambda$ where $\bar{p} = 1-p$. Hence by Chernoff's bound the first term can be upper bounded by
\begin{equation*}
    \mathbb{P}(\frac{M_j}{|S_j|} \geq \frac{1}{2}) \leq \exp(-|S_j|\cdot D(p+\lambda||p))
\end{equation*}
and similarly the second term can be upper bounded by
\begin{equation*}
  \mathbb{P}(\frac{1}{2}>\frac{M_i}{|S_i|})  \leq \exp(-|S_i|\cdot D(\bar{p}-\lambda||\bar{p}))
\end{equation*}
Hence we have
\begin{align*}
  \mathbb{P}(Er^{(i)}) & \leq (k-1)[\max_{j\neq i}\exp(-|S_j|\cdot D(p+\lambda||p)) \\
  &+ \exp(-|S_i|\cdot D(\bar{p}-\lambda||\bar{p}))]
\end{align*}
Recall that from \eqref{pf:minS}, we have $\min_{i\in [k]}|S_i| \geq \frac{M\beta}{2}$ with probability at least $1-\frac{\delta}{4}$. Moreover, we observe that $D(0.5||p) = \min\{D(p+\lambda||p),D(\bar{p}-\lambda||\bar{p})\}$ by the symmetry of KL-divergence for Bernoulli distribution. Thus, the error probability for any new point can be upper bounded as
\begin{equation*}
  \mathbb{P}(Er) \leq \max_{i}\mathbb{P}(Er^{(i)}) \leq 2(k-1)\exp(\frac{-M\beta D(0.5||p)}{2})
\end{equation*}
Note that from Theorem~\ref{thm:1} we will need to query $\mathcal{Q}^{\ast}(\frac{\delta}{4})$ nodes in the aggressive search Phase if we want the exact result holds for probability at least $1-\frac{\delta}{4}$ in noiseless case. Hence by using the union bound, the error probability for exact recovery of these $\mathcal{Q}^{\ast}(\frac{\delta}{4})$ points is upper bounded by
\begin{equation*}
  2\mathcal{Q}^{\ast}(\frac{\delta}{4})(k-1)\exp(\frac{-M\beta D(0.5||p)}{2})
\end{equation*}
Requiring this to be smaller than $\frac{\delta}{4}$, then we have
\begin{equation*}
  M \geq \frac{2}{\beta D(0.5||p)}\log(\frac{8(k-1)\mathcal{Q}^{\ast}(\frac{\delta}{4})}{\delta})
\end{equation*}
This is exactly the forth requirement on $M$ in \eqref{thm:4req}. Further, via the union bound, the overall algorithm will succeed with probability at least $1-\delta$. Note that if we have exact recovery on these $\mathcal{Q}^{\ast}(\frac{\delta}{4})$ nodes, then we can indeed find the cut set $C$ by Theorem~\ref{thm:1}, which concludes the proof.

\end{document}